%% file: iclr2025_conference.tex
\documentclass{article} 
\usepackage{iclr2025_conference,times}

 \advance\textwidth.1cm

\usepackage{booktabs} 
\input{math_commands.tex}

\usepackage{hyperref}
\usepackage{url}
\iclrfinalcopy
\title{CONTRA: Conformal Prediction Region via Normalizing Flow Transformation}

\author{Zhenhan Fang, Aixin Tan \\
Department of Statistics and Actuarial Science\\
The University of Iowa\\
Iowa City, IA 52242, USA \\
\texttt{\{zhenhan-fang, aixin-tan\}@uiowa.edu} \\
\And
Jian Huang \\
Department of Applied Mathematics\\
 The Hong Kong Polytechnic University  \\
Hongkong, China \\
\texttt{j.huang@polyu.edu.hk}}

\usepackage{amsmath, amssymb, amscd, amsthm, amsfonts,bm}
\usepackage{algorithm} 
\newcommand{\NoIndentState}[1]{\State\hspace{-3.5mm}#1}

\usepackage{algpseudocode} 
\usepackage{graphicx}
\usepackage[normalem]{ulem}
\usepackage{comment} 
\usepackage{xspace}
\usepackage{multirow}
\usepackage{enumitem}
\usepackage{subcaption} 

\newtheorem{definition}{Definition}
\newtheorem{theorem}{Theorem}
\newtheorem{proposition}{Proposition}
\newtheorem{lemma}{Lemma}

\newcommand{\orange}[1]{{\color{orange}{#1}}}

\newcommand{\red}[1]{{\color{red}{#1}}}
\newcommand{\blue}[1]{{\color{blue}{#1}}}

\newcommand{\NFTC}{CONTRA\xspace}
\newcommand{\RCONTRA}{ResCONTRA\xspace}

\newcommand{\bX}{ \mathbf{X} }
\newcommand{\bY}{ \mathbf{Y} }
\newcommand{\bZ}{ \mathbf{Z} }
\newcommand{\bw}{\mathbf{w} }
\newcommand{\bx}{\mathbf{x} }
\newcommand{\by}{ \mathbf{y} }
\newcommand{\bz}{ \mathbf{z} }

\newcommand{\br}{\mathbf{r} }
\newcommand{\D}{ D }
\newcommand{\RR}{ D^* }
\newcommand{\q}{q\xspace}
\newcommand{\Qhat}{\hat{Q}\xspace}
\newcommand{\ChatM}{\widehat{C}^\text{MCQR}}
\newcommand{\ChatPCP}{\widehat{C}^\text{PCP}}
\newcommand{\Ehat}{\hat{E}}
\newcommand{\Zcal}{ \gZ_{\text{cal}}}
\begin{document}

\maketitle

\begin{abstract}
Density estimation and reliable prediction regions for outputs are crucial in supervised and unsupervised learning. While conformal prediction effectively generates coverage-guaranteed regions, it struggles with multi-dimensional outputs due to reliance on one-dimensional nonconformity scores. To address this, we introduce CONTRA: CONformal prediction region via normalizing flow TRAnsformation. CONTRA utilizes the latent spaces of normalizing flows to define nonconformity scores based on distances from the center. This allows for the mapping of high-density regions in latent space to sharp prediction regions in the output space, surpassing traditional hyperrectangular or elliptical conformal regions. Further, for scenarios where other predictive models are favored over flow-based models, we extend CONTRA to enhance any such model with a reliable prediction region by training a simple normalizing flow on the residuals. We demonstrate that both CONTRA and its extension maintain guaranteed coverage probability and outperform existing methods in generating accurate prediction regions across various datasets. We conclude that CONTRA is an effective tool for (conditional) density estimation, addressing the under-explored challenge of delivering multi-dimensional prediction regions.

\end{abstract}

\section{Introduction}
\label{introduction}

In unsupervised and supervised learning, density and conditional density estimation are frequently used to communicate the inherent variability in the potential outcome and assess the reliability of estimations and predictions \citep{Hall_2005, guhaniyogi2014bayesian, Dalmasso_2020}. These methods can report regions that aim to capture the true outcome with a specified probability, such as 90\%, based on the estimated density. However, the actual coverage rate depends on the underlying model assumptions and does not come with any inherent guarantees.


Conformal prediction is a way to produce regions with guaranteed coverage rate of future outcomes non-asymptotically and free of model assumptions \citep{papadopoulos2002inductive, vovk2005algorithmic,lei2018distribution, romano2019conformalized, lei2015conformal,pmlr-v108-izbicki20a, chernozhukov2021distributional}. Briefly, the most popular \textit{split conformal} approach partitions the whole dataset into a \textit{proper training set} and a \textit{calibration set}. A predictive model is trained using the former, and non-conformity scores are calculated on the latter that measure the deviation of the predictions from the actual outputs. Quantiles of the non-conformity scores are used to set threshold values, producing \textit{prediction regions} that capture future outcome with desired probabilities. More accurate predictive models usually yield tighter conformal prediction regions. 

The conformal prediction literature has primarily focused on one-dimensional output settings, partly because the conformal idea depends on quantiles. Among the relatively few multi-targeting methods in the literature, most restrict region shapes to be boxes or ellipsoids. For complex conditional distributions of the output, such as those with multi-modes or unequal-tails in different directions, boxes or ellipsoids encompass low-density areas to ensure validity, and become inflated. Two recent approaches that allow flexible shape for multi-dimensional conformal regions are the spherically transformed directional quantile regression (ST-DQR) \citep{feldman2023calibrated} and the probabilistic conformal prediction (PCP) \citep{wang2022probabilistic}.
But their prediction regions are composed of numerous balls, which often lead to highly irregular boundaries and more disconnected regions than desirable, hindering interpretability. 

We propose CONTRA, 
a new method that produce prediction regions for multi-dimensional outputs. 
CONTRA stands for CONformal prediction region via normalizing flow TRAnsformation. Here, normalizing flows (NF) \citep{dinh2016density, huang2018neural, chen2019residual,  kingma2018glow, JMLR:v22:19-1028} and conditional normalizing flows (CNF) \citep{ Winkler2019LearningLW} are generative methods that provide samples, hence density estimates, for the output. 

An example is shown in Figure~\ref{fig:maps} for the prediction region of dropoff locations of Taxi  ($\by$) given an arbitrary pickup point ($\bx$, shown as blue pin) in New York city. This is a situation where the conditional density of the outcome is multimodal. Ten different conformal predictions were displayed. The five methods with restricted shapes in (e) and (f) yield large and unnatural prediction regions. The CONTRA region in (b), a variation of CONTRA in (c), and the PCP and ST-DQR methods based on either NF or diffusion models in (d), (g) and (h) produced comparable areas with similar sizes.  But both CONTRA prediction regions in (b) and (c) are much more connected than that of PCP and ST-DQR, providing the most effective and interpretable results that maintained theoretical rigor.

\begin{figure}[h]
\centering
    \begin{subfigure}   {0.19\textwidth} \includegraphics[width=\textwidth,height=3.5cm,  trim={0cm 10cm 10cm 0cm}, clip]{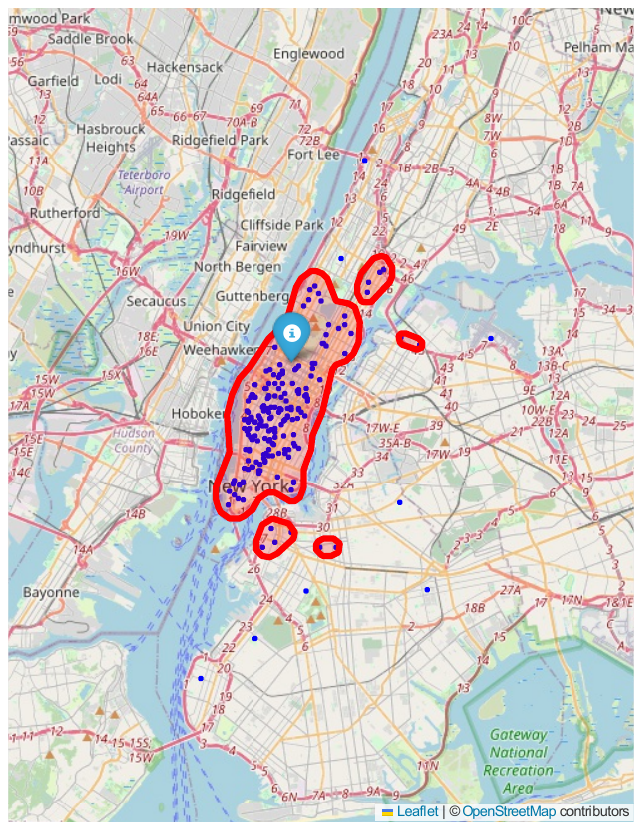}
        \vspace{-35pt} 
        \caption{KDE (unsafe) }
    \end{subfigure}
    \begin{subfigure}{0.19\textwidth}
        \includegraphics[width=\textwidth,height=3.5cm,  trim={0cm 10cm 10cm 0cm}, clip]{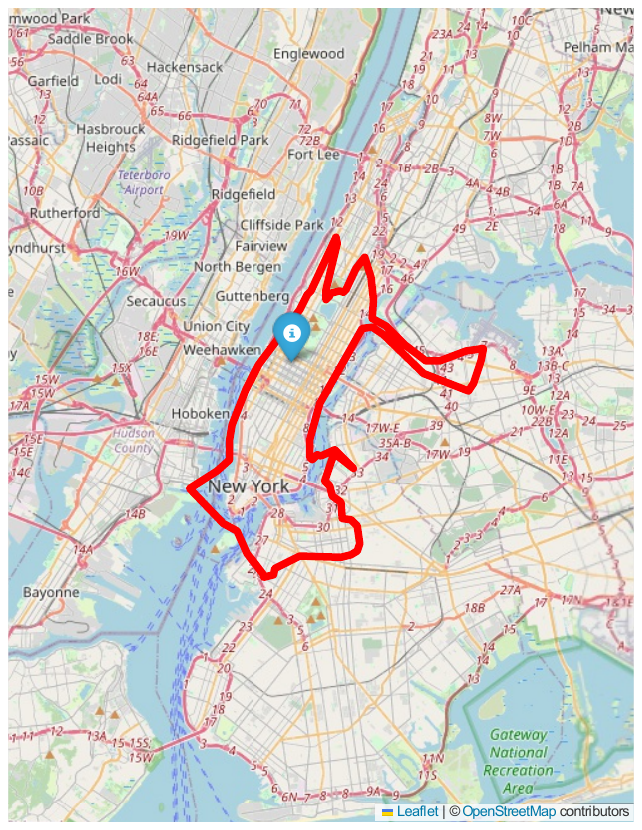}
        \vspace{-35pt} 
        \caption{CONTRA}
    \end{subfigure}
    \begin{subfigure}{0.19\textwidth}
        \includegraphics[width=\textwidth,height=3.5cm,  trim={0cm 10cm 10cm 0cm}, clip]{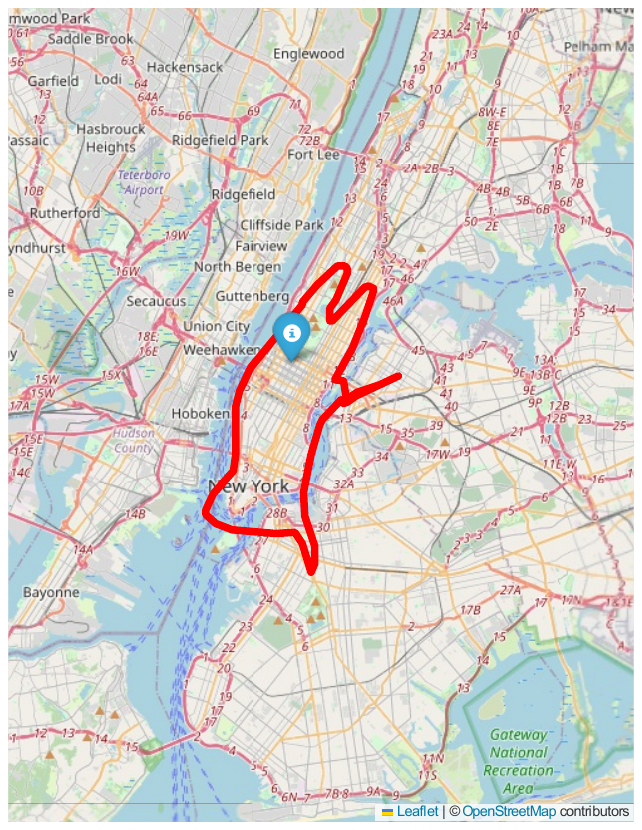}
        \vspace{-35pt} 
        \caption{ResCONTRA}
    \end{subfigure}
    \begin{subfigure}{0.19\textwidth}
       \includegraphics[width=\textwidth,height=3.5cm,  trim={0cm 10cm 10cm 0cm}, clip]{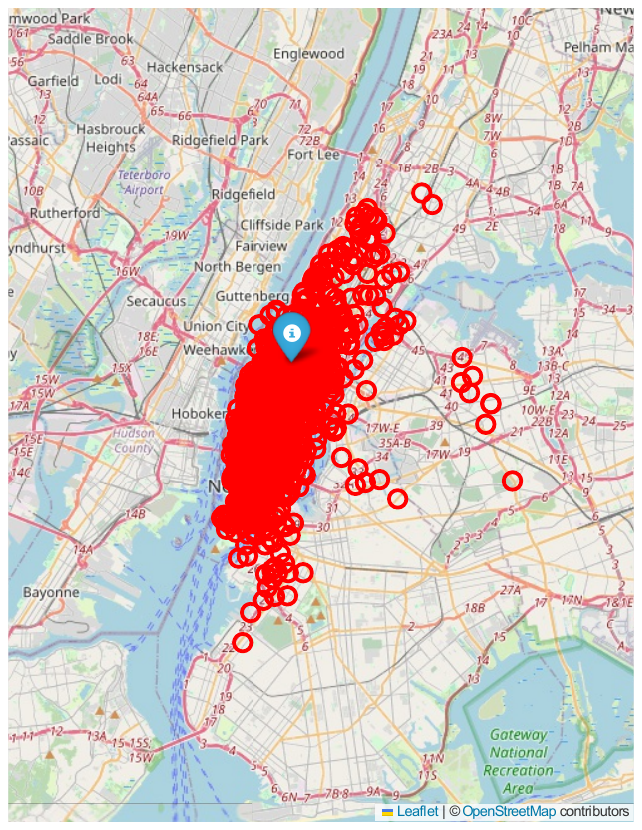}
       \vspace{-35pt} 
        \caption{NF + PCP}
    \end{subfigure}\\
        \begin{subfigure}{0.19\textwidth}
       \includegraphics[width=\textwidth,height=3.5cm,  trim={0cm 10cm 10cm 0cm}, clip]{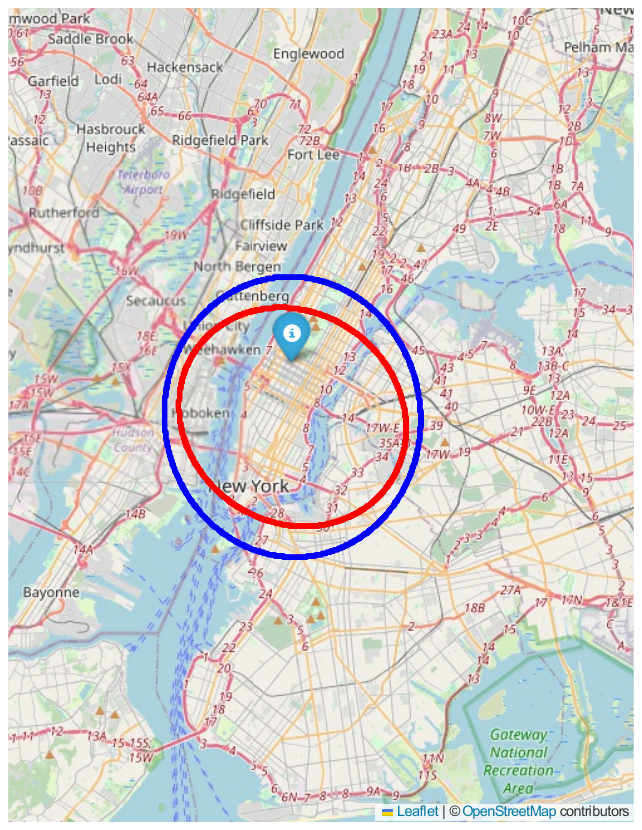}
       \vspace{-35pt} 
        \caption{Ellipses }
    \end{subfigure}
    \begin{subfigure}{0.19\textwidth}
       \includegraphics[width=\textwidth,height=3.5cm,  trim={0cm 10cm 10cm 0cm}, clip]{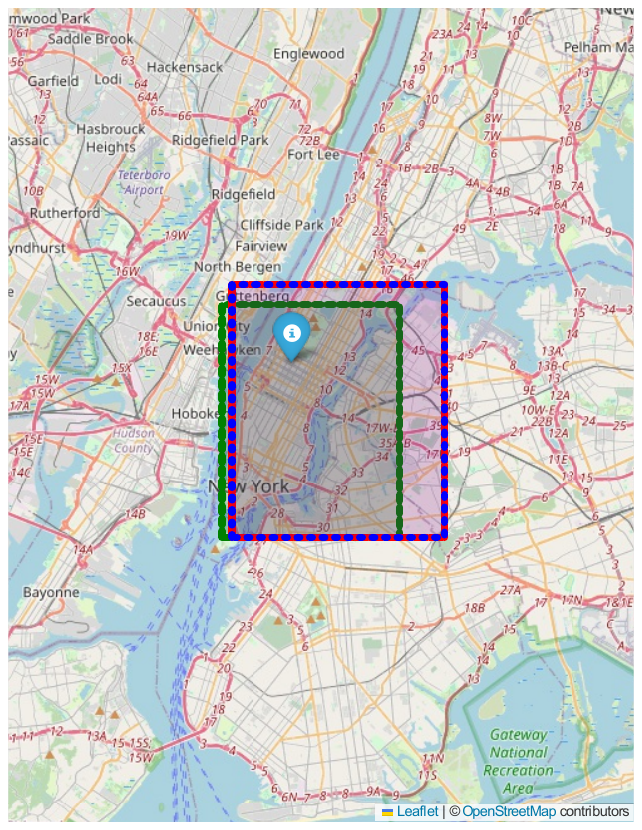}
           \vspace{-35pt}  
        \caption{\small Rectangles   } 
    \end{subfigure}
      \begin{subfigure}{0.19\textwidth}
       \includegraphics[width=\textwidth,height=3.5cm,  trim={0cm 10cm 10cm 0cm}, clip]{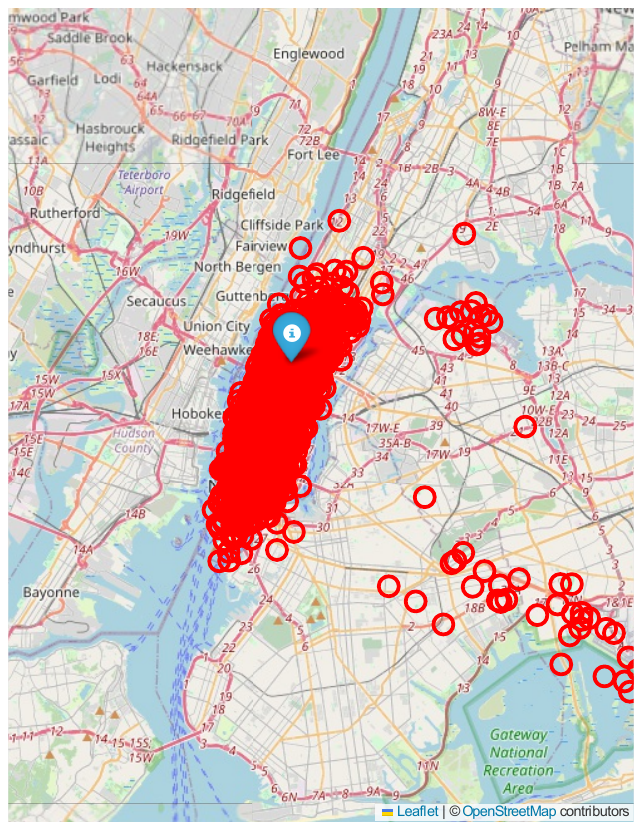}
       \vspace{-35pt} 
        \caption{DM + PCP}
    \end{subfigure}
    \begin{subfigure}{0.19\textwidth}
       \includegraphics[width=\textwidth,height=3.5cm,  trim={0cm 10cm 10cm 0cm}, clip]{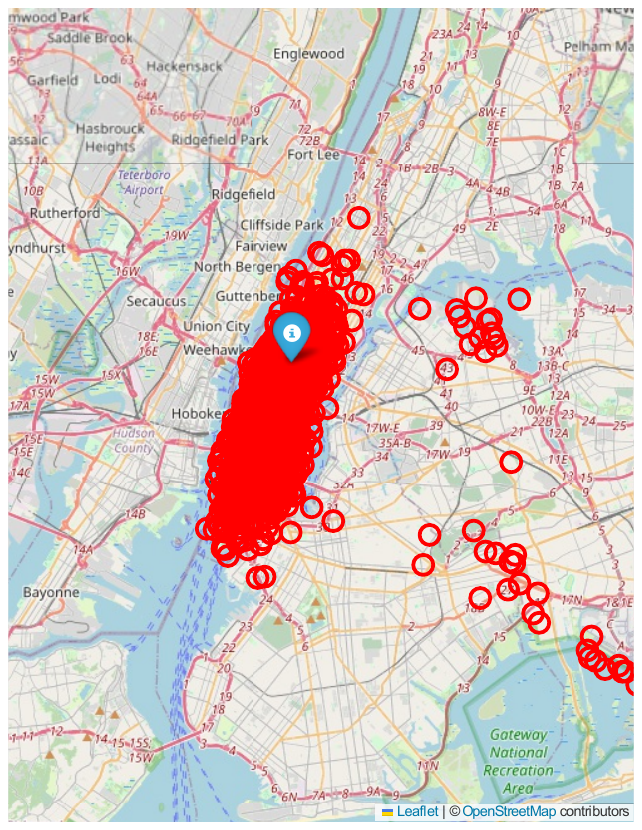}
           \vspace{-35pt}  
        \caption{\small DM + ST-DQR    } 
    \end{subfigure}
    \caption{\small NYC Taxi data. Prediction regions of drop-off location given a pickup coordinate (blue pin). (a) shows a 90\% high-density region estimated via kernel density estimates (KDE) based on dropoff locations from the 200 nearest pickups to the blue pin. While there is no guarantee of coverage, the shape of this region offers an informal visual reference to help users assess conformal regions. (b)-(f) show various conformal predictions based on NF, trained on 3600 samples and calibrated on 1200. The methods include our proposed CONTRA and ResCONTRA; PCP and ST-DQR (the latter result is not shown due to its similarity to the PCP);  NLE and RCP that always lead to elliptical regions; and MCQR, Dist-Slpit, and CQR$_\text{bon}$ that always lead to rectangular regions;  (g) and (h) show the PCP and ST-DQR conformal regions based on a diffusion model, in particular, a Denoising Diffusion Probabilistic Model \citep{ho2020denoising} trained with $800$ timesteps, learning rate 0.001 and 300 epochs.}

\label{fig:maps}
\end{figure}


 We briefly explain here how CONTRA works. Given an input, CNF learns a bijection that maps a latent variable from a simple base distribution, like a standard multivariate Gaussian, to the outcome variable (details in Section~\ref{sec:nf}). After training a CNF on the proper training set, \NFTC maps high-density regions (HDRs) of the base distribution to the outcome, while the split-conformal approach adjusts these regions in the latent space by comparing latent representations of the calibration set to the expected base distribution. Consequently, CONTRA’s prediction region is a bijection of a single high-density area, naturally aligning with the output’s conditional distribution, circumventing the need to union many regions, as seen in PCP and ST-DQR.

While flow-based models excel in many contexts, there are times when other predictive approaches are preferable. For these situations we invent a variation of CONTRA called ResCONTRA, which enhance any predictive model with a reliable prediction region by training a simple NF on the residuals. The ResCONTRA approach builds on \citet{colombo2024normalizingflowsconformalregression}, where one-dimensional residuals were transformed to follow base distributions like Uniform or Gaussian using relatively simple bijections. ResCONTRA extends this idea to multi-dimensional outputs and introduces the concept of a symmetric base distribution in $\mathbb{R}^q$ en route to define an ideal non-conformity score. In describing ResCONTRA in section~\ref{sec:resCONTRA}, we also highlight the need for a three-step calibration procedure to ensure exchangeability that leads to coverage guarantee. 
Compared to CONTRA, which trains a single model, ResCONTRA seems less efficient as it trains two models on smaller data sets. But ResCONTRA has the potential to excel in situations, say, when $\E(\by|\bx)$ is highly intricate and can be better learned with techniques like XGBoost rather than an underlying bijection. Some empirical comparisons of the two proposed methods can be found in Section~\ref{sec:simu} and Appendix~\ref{sec:compareCONTRAs}.

 We provide details of CONTRA and ResCONTRA in Section~\ref{sec:contra}, review existing multi-output conformal predictions in Section~\ref{sec:others}.
We demonstrate that CONTRA and ResCONTRA achieve the desired coverage probability (e.g., 90\%) with high accuracy in Section~\ref{sec:simu}. They outperform shape-restricted methods by providing smaller regions that better reflect the true density. And they have smoother boundaries and improved interpretability compared to the flexible regions of PCP and ST-DQR.

\section{Conditional normalizing flow}\label{sec:nf}

 Let $(\bX, \bY)$ denote a  random vector with unknown probability density (or mass) function $p_{\bX\bY}(\bx,\by)$, supported on $\gS\subset \sR^{(p+q)}$. 
 For each $\bx \in \gX \subset \sR^{p} $, the support of $\bY$ is denoted by $\gY_{\bx} \subset \sR^{q}$. Suppose a data set $\D = \{(\bx_i, \by_i)\}_{i=1}^n$ has been drawn from $p_{\bX\bY}(\bx, \by)$. The objective of a conditional density estimation method is to find an estimate $\hat{p}_{\bY|\bX}(\cdot|\bx)$  of the true conditional density $p_{\bY|\bX}(\cdot| \bx)$ for all $\bx\in \gX$ based on $\D$. In a parametric approach, where candidate models are $\gP=\{p_{{\bY|\bX},\theta}; \theta\in \Theta\}$, the maximum likelihood principle leads to $\hat{p}_{\bY|\bX}(\cdot|\bx)= p_{\bY|\bX, \hat{\theta}}(\cdot| \bx)$, where $\hat{\theta}$ is given by: 
 \begin{equation}
\hat{\theta} = \text{arg} \max_{\theta}~ \prod_{i = 1}^n p_{\bY|\bX,\theta}(\by_i|\bx_i)= \text{arg} \max_{\theta}~ \sum_{i = 1}^n \log p_{\bY|\bX,\theta}(\by_i|\bx_i) \,.\label{eq:2}
\end{equation}
The CNF approach specifies $\gP$ to be such that, each $p_{{\bY|\bX},\theta}(\cdot|\bx)$ is a density of a $q$-dimensional random vector $\bY$ that can be transformed via a differentiable bijection to a simple random vector $\bZ \in \gZ$, such as a $q$-dimensional Gaussian\footnote{The general CNF framework allows the base distribution 
of $\bz$ to depend on
$\bx$ and $\theta$. We intentionally used one free of $\bx$ so that the latent  $\bz$ at different values of $\bx$ are comparable and can be pooled for calibration.}. We denote this transformation by
\begin{equation*}
    \by = t_\theta(\bz, \bx) 
\end{equation*}
such that for any $\bx\in \gX$, $t(\cdot, \bx)$ is 
a differentiable bijection from $\gZ$ to $\gY_\bx$, with inverse function  $t^{-1}(\cdot, \bx)$.
Let $\det$ denote the determinant function. The change-of-variable technique implies:
\begin{equation*}
p_{\bY|\bX,\theta}(\by|\bx) = p_{\bZ}(t_\theta^{-1}(\by,\bx)) \left| \det\frac{\partial t_\theta^{-1}(\by,\bx)}{\partial \by} \right|\,.
\label{eq:3}
\end{equation*}

It can be extremely challenging to directly constructing an expressive enough collection of transformations, $ \{t_\theta, \theta\in \Theta\} $, to capture the true conditional density,  $p_{\bY|\bX}(\by|\bx)$. Fortunately, the compositional nature of bijective functions allows  complex transformations to be built from simpler ones: $ t_\theta(\cdot, \bx) = t_{m,\theta} \circ t_{m-1, \theta} \circ \hdots \circ t_{1,\theta}(\cdot,\bx)$.  
Accordingly, \begin{align*}
    \log~p_{\bY|\bX, \theta}(\by|\bx) = \log~ p_{\bZ}(\bz) - \sum_{l = 1}^m \log \left| \det \frac{\partial t_{l,\theta}}{\partial \bz^{(l-1)}} \right|\,,
    \label{eq:4}
\end{align*}
where $\bz_0=\bz$, $\bz_l=t_{l,\theta}(\bz_{l-1}, \bx)$ for $l=1, \ldots, m$, and $\bz_m=t_\theta(\bz, \bx)=\by$.
And 
\eqref{eq:2} can be rewritten as:
\begin{equation}
    \hat{\theta}= \arg\max_{\theta}~\sum_{i = 1}^n\begin{bmatrix}\log~ p_{\bZ}(\bz_i)-  \sum_{l = 1}^m\log\begin{vmatrix}\det \frac{\partial t_{l, \theta}}{\partial \bz_i^{(l-1)}}\end{vmatrix}\end{bmatrix}. \label{eq:5}
\end{equation}  
From 
\eqref{eq:5}, it's essential to specify transformations $t_l$ that are easy to evaluate  Jacobian determinants for. 
Several structured approaches, including autoregressive flows \citep{huang2018neural}, linear flows \citep{kingma2018glow}, and residual flows \citep{chen2019residual}, facilitate such tractability. In this paper, we employ an affine transformation model known as realNVP \citep{dinh2016density}, which leverages coupling layers \citep{dinh2014nice}  to improve computational efficiency.  Further details on realNVP are included in Appendix \ref{clayer}.

\section{CONTRA: Conformal Region via Normalizing Flow Transformation }
\label{sec:contra}

Having trained a CNF model, $t_{\hat{\theta}}$, it is tempting to perform naive conditional density estimation as follows. Given any ${\bx}_{n+1}$ of interest, define the naive $(1-\alpha)$  prediction region to be \[ \tilde{C}_{1-\alpha}(\bx_{n+1})=\{ \by: \by= t_{\hat{\theta}}(\bz, \bx_{n+1}), \bz \in B_{1-\alpha}\}\,,\]
where $B_{1-\alpha}$ is the q-dimensional ball that is the highest probability region of size $(1-\alpha)$ in the latent space for the standard Gaussian. However, the actual coverage rate of this region can vary greatly depending on the quality of $\by=t_{\hat{\theta}}(\bz, \bx_{n+1}), \bZ\sim \text{N}_q({\bf 0}, {\bf I})$ as an approximate sampler for $p_{\bY|\bX=\bx_{n+1}}$. Approximation errors can be high for $\bx_{n+1}$ values not well represented in training. 
 Given $\bx_{n+1}$, we would like methods to construct $\hat{C}(\bx_{n+1}) \subset \sR^q$ that satisfies \textit{the marginal coverage guarantee}:
\begin{equation}
        \sP[\by_{n+1}\in \hat{C}(\bx_{n+1}) ]\geq 1 - \alpha\,.
        \label{mc}
\end{equation}

\subsection{Conformal regions: from the latent space to the output space}

Two general ideas to achieve \eqref{mc} are  full conformal prediction \citep{vovk2005algorithmic} and  split conformal prediction \citep{papadopoulos2002inductive, lei2018distribution, lei2015conformal,angelopoulos2021gentle}. Full conformal prediction requires recalculating non-conformity scores across the entire dataset for each new instance. Split conformal prediction partitions the whole dataset into two disjoint sets: the \textit{proper training set}, $D_1 = \{(\bx_i, \by_i) : i \in I_1\}$ of size $n_1$; and the \textit{calibration set}, $D_2 = \{(\bx_i, \by_i) : i \in I_2\}$ of size $n_2 = n-n_1$. 
We choose the split approach for its computational efficiency in developing CONTRA.

Recall $t_{\hat{\theta}}(\cdot, \cdot): \gS \rightarrow \sR^q$ stands for the CNF model with a standard Gaussian base  trained from $ D_1 $. Given any $(\bx, \by)$, there is a latent representation of the output, which we denote by $\hat{\bz}=t_{\hat{\theta}}^{-1}(\by, \bx)$. 
Denote the collection of latent representations for $D_2$ by
\begin{equation}
 \Zcal= \{\hat{\bz}_i\in \sR^q: \hat{\bz}_i = t_{\hat{\theta}}^{-1}(\by_i, \bx_i), i \in I_2\}\,.   \label{Zcal}
\end{equation}
Let  $r_{1-\alpha}$ denote the $\lceil (1-\alpha) (n_2 + 1) \rceil$-th smallest member of $\{\|\mathbf{\hat{z}}_i\|_2, i\in I_2\}$, where $\|\cdot\|_2$ is the Euclidean norm.
We define the \textit{conformal ball} of size $(1-\alpha)$ to be
\begin{equation}
    \hat{E} = \{\mathbf{z} \in \mathbb{R}^q: \|\bz\|\leq r_{1-\alpha}\}\,.\label{Ehat}
\end{equation}
Then $\Ehat$ contains at least $(1-\alpha)100\%$ of the points in $\Zcal$. By the inflation of quantiles lemma \citep{romano2019conformalized}, assuming the points in $D_2$ and the point to predict,  $(\bx_{n+1}, \by_{n+1})$, are exchangeable, we have
\begin{equation*}
       \sP(\hat{\bz}_{n+1}\in \hat{E}) =  \sP(\|\hat{\bz}_{n+1}\| \leq r_{1-\alpha})\geq 1 - \alpha.
\end{equation*}
The CONTRA prediction region is defined to be the mapping of $\hat{E}$ in the output space: \begin{equation*}
    \hat{C}({\bx_{n+1}}) = t_{\hat{\theta}}( \hat{E}, \bx_{n+1})\,.\label{pr}
\end{equation*}
The algorithm for producing CONTRA prediction region is summarized as Algorithm~\ref{alg:cap}. Its coverage guarantee is stated in Proposition~\ref{thm:coverage} in Appendix~\ref{contra_proof}. 

\begin{algorithm}[H]
\caption{\,Conformal Region via Normalizing Flow Transformation (\NFTC)}\label{alg:cap}
\begin{algorithmic}
\NoIndentState{$\textbf{Input :}$}
\State {\small \text{1:}} Data $\{(\bx_i, \by_i)\}_{i=1}^n\in \sR^p \times \sR^q.$
\State {\small \text{2:}} Miscoverage level $\alpha\in [0,~1].$
\State {\small \text{3:}} A CNF algorithm $\mathcal{A}$ with a standard Gaussian base distribution.
\State {\small \text{4:}} A point $\bx_{n+1}$ that needs a prediction region for its output, $\by_{n+1}$.
\NoIndentState{$\textbf{Procedure :}$}
\State {\small \text{1:}} Randomly split $\{(\bx_i, \by_i)\}_{i = 1}^n$ into two disjoint sets $D_1$ and $D_2$.
\State {\small \text{2:}} Fit  $t_{\hat{\theta}}$ by $\mathcal{A}(D_1)$.  

\State {\small \text{3:}} Obtain $\gZ_{\text{cal}}$ as in 
\eqref{Zcal}.
\State {\small \text{4:}} Compute $r_{1-\alpha}$ and  define $\hat{E}$ as in 
\eqref{Ehat}.
\State {\small \text{5:}} Compute $\hat{C}(\bx_{n+1}) = t_{\hat{\theta}}( \hat{E}, \bx_{n+1})$.
\NoIndentState{$\textbf{Output :}$}
\State 
A prediction region for $\by_{n+1}$ is given by $\hat{C}(\bx_{n+1})$. 
\end{algorithmic}
\end{algorithm}

 \subsection{Practical aspects of presenting CONTRA outputs}\label{sec:contra-present}

 There is flexibility in implementing steps 4 and 5 of Algorithm~\ref{alg:cap} 
 and in displaying the prediction region. One way is to get samples inside $\Ehat$
  and map them to the output space. Samples can be random or deterministic, using methods like Monte Carlo, grid points, or Quasi Monte Carlo. 
Another less costly way is to only sample from the \textit{boundary} of $\hat{E}$, which will map to of $\hat{C}$ in the output space. The validity of this approach hinges on Proposition~\ref{thm:boundary} in Appendix~\ref{boundary_proof}, which states that boundaries of sets are preserved under homomorphisms, including the CNF transformation.

\subsection{Connectedness and volume of CONTRA}

There is no universal criteria for comparing different prediction regions, provided they have guaranteed coverage. 
However, smooth boundaries and smaller volumes are generally considered desirable characteristics.

\underline{Connectedness.} Fewer disconnected sets and smoother boundaries for a prediction region means predictions close to each other are more likely to be classified the same way, either inside or outside of the prediction region. These properties contribute to robust and interpretable inferences in practice. 
The prediction region $\hat{C}$ of CONTRA is indeed closed and connected due to the following.

\begin{proposition} \citep[Chap.3]{james2000topology}\label{thm:connect}
    Suppose $E\subset \gZ$ is  closed and  connected, and $t$ is a homeomorphism. Then $t(E)\subset \gY$ is also closed and connected . 
    \label{tE}
\end{proposition}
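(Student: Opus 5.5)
The plan treats the two properties separately, reading $t$ as a homeomorphism from $\gZ$ onto $\gY$. Here $\gY$ stands for the relevant output support $\gY_{\bx}$, and for CONTRA $t = t_{\hat{\theta}}(\cdot,\bx_{n+1})$ with $\gZ=\sR^q$. Closedness comes from the continuity of the inverse map; connectedness comes from the continuity of $t$ itself.

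\emph{Closedness.} Write $s = t^{-1}:\gY\to\gZ$, which is continuous because $t$ is a homeomorphism. Since $t$ is a bijection, $t(E) = s^{-1}(E)$. The preimage of a closed set under a continuous map is closed, so $t(E)$ is closed in $\gY$, which is the topology relevant to the statement. For CONTRA the set $E=\hat{E}$ is a closed ball, hence compact. The image of a compact set under the continuous map $t$ is compact, and compact subsets of the Hausdorff space $\sR^q$ are closed. So $t(\hat{E})$ is closed in $\sR^q$ as well, not merely relative to $\gY_{\bx}$.

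\emph{Connectedness.} Suppose $t(E)$ were disconnected. Then there would be relatively open, disjoint, nonempty sets $U,V\subset t(E)$ with $U\cup V = t(E)$. Continuity of $t$ restricted to $E$ makes $t^{-1}(U)\cap E$ and $t^{-1}(V)\cap E$ relatively open in $E$. They are disjoint and nonempty because $t$ is a bijection, and their union is $E$. This separates $E$ and contradicts the connectedness of $E$.

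\emph{Applying the result.} The only point needing care is the ambient topology. One has to be explicit that closedness is meant in $\gY$, or in $\sR^q$ via the compactness argument when $E$ is a closed ball. One also has to confirm that the CNF map $t_{\hat{\theta}}(\cdot,\bx)$ really is a homeomorphism. This holds because it is a differentiable bijection whose inverse is also given by an explicit differentiable formula, for example realNVP coupling layers, so both directions are continuous. With these points checked, the result follows immediately, and $\hat{C}(\bx_{n+1})=t_{\hat{\theta}}(\hat{E},\bx_{n+1})$ is closed and connected because the ball $\hat{E}$ is.
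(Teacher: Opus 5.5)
Your proof is correct. It is the standard textbook argument that the paper relies on: the paper gives no proof of its own and only cites \citep[Chap.~3]{james2000topology}. In that argument, closedness follows from continuity of $t^{-1}$, since $t(E)=(t^{-1})^{-1}(E)$, and connectedness follows from continuity of $t$. One small refinement: in your connectedness step, the sets $t^{-1}(U)\cap E$ and $t^{-1}(V)\cap E$ are disjoint and nonempty for any map with $U,V\subseteq t(E)$, so that half needs only continuity, not bijectivity.
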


\underline{Volume calculation.}
 Assessing, calibrating, and comparing different prediction regions necessitates a tool to calculate the volume of the regions, which can be challenging for irregular multi-dimensional shapes. We propose an approximation method for the volume of $\hat{C}(\bx_{n+1})$, using by-products from the training process of CNF. Note that 

\begin{equation*}
\text{Vol}\left(\hat{C}(\bx_{n+1})\right) = \int_{\hat{E}} \left|\det(J_{t_{\hat{\theta}}}(\bz))\right| \, d\bz= \text{Vol}(\hat{E}) \int_{\hat{E}} \left|\det(J_{t_{\hat{\theta}}}(\bz))\right|\frac{1}{\text{Vol}(\hat{E}) } \, d\bz\,.
\end{equation*}
The above integral can be approximated using a Monte Carlo estimator based on a random sample, $\{\bz_b\}_{b = 1}^B$, drawn uniformly from $ \hat{E} $, with density $1/\text{Vol}(\hat{E})$:
\begin{equation*}
    \widehat{\text{Vol}}\left(\hat{C}(\bx_{n+1})\right) = \text{Vol}(\hat{E})\,\frac{1}{B} \sum_{b=1}^B\left|\det(J_{t_{\hat{\theta}}}(\bz_b))\right|\,.
\end{equation*}

\subsection{Extension to work with other prediction models: ResCONTRA}\label{sec:resCONTRA}
The CONTRA method proposed so far restricts the fitted model to be a NF. We now present a variation, ResCONTRA, to enable conformal prediction for any user-chose prediction methods. As explained in the introduction, this is inspired by an idea in \citet{colombo2024normalizingflowsconformalregression}. 
We split the whole dataset into $D_1$, $D_2$ and $D_3$. Here, $D_1$ is used to train the user-chosen point estimator, $\hat{f}$. Residuals, $\br_i = \by_i - \hat{f}(\bx_i)$, for points in the latter two sets are calculated. Then the standard CONTRA is applied to $\RR_2=\{(\bx_i, \br_i)\}_{i \in I_2}$ and $\RR_3=\{(\bx_i, \br_i)\}_{i \in I_3}$, which serve as the proper training and the calibration set respectively in the split conformal framework. 
Denote the NF that transforms the residuals in $\RR_2$ to latent $\bz$ by $t^{-*}$, and the conformal ball $\hat{E}^*$ is  calibrated with  $\RR_3$.
Finally, the prediction region for a new data point is given by 
\[\hat{C}^*(\bx_{n+1})=\hat{f}(\bx_{n+1})+t^*(\Ehat^*, \bx_{n+1})\,.\]
It's straightforward to see that marginal coverage is guaranteed, as \[\mathbb{P}( \by_{n+1}\in \hat{C}^*(\bx_{n+1}))=
\mathbb{P}(\mathbf{r}_{n+1} \in t^*(\hat{E}^*, \mathbf{x}_{n+1})) \geq 1 - \alpha.
\]

Remark: If the distribution of $\bz$ were perfectly independent of 
$\bx$, Corollary~2.6 of \citet{colombo2024normalizingflowsconformalregression} suggests that conditional coverage probability of $\by_{n+1}$ given any $\bx$ can be achieved. However, since perfect independence is not achievable in practice with finite data, they also derived in Theorem~2.7 a theoretical bound for the potential reduction in conditional coverage probabilities. This bound depends on the deviation between the learned distribution of 
$\bz$ and the ideal base distribution that is independent of $\bx$. Since both CONTRA and ResCONTRA aim to transform $\bz$ to have symmetrical distributions free of $\bx$, both methods are expected to approximately achieve the desired conditional coverage similar to those shown in \citet{colombo2024normalizingflowsconformalregression}. It is an on-going work to analyze the conditional coverage probabilities of CONTRA and ResCONTRA both theoretically and empirically, with the challenging but important goal of deriving practically useful bounds.

\section{Related work}\label{sec:others}

In contrast to CONTRA, traditional conformal prediction methods  construct prediction regions directly in the output space. We review some of them below.

\subsection{Conformal approaches that target multi-dimensional outputs}\label{sec:FellowMultiConformal}
Limited conformal methods in the literature target multi-dimensional outputs directly. An example is the robust conformal prediction (RCP) \citep{johnstone2021conformal}, which employs the global covariance matrix to produce an ellipsoid region. The normalized locally ellipsoid (NLE) \citep{messoudi2022ellipsoidal} extends RCP by incorporating local covariance matrices, making the region adaptive to $\bx$. 

Two recent methods, PCP and ST-DQR, are closer to CONTRA as they don't restrict shape of the prediction regions. First, the PCP method was proposed by \citet{wang2022probabilistic}. 
Given a generative model for estimating the conditional density, PCP defines the non-conformity score of a data point $(\bx_i, \by_i)$ as
\begin{equation*}
    s_i = \min_{1\leq k\leq K} \|\by_i - \hat{\by}^k_{i}\|\,,
\end{equation*}
where $\{\hat{\by}_{i}\}, _{k=1}^K$ is a sample of the output generated from the estimated conditional density. Then the prediction
region at $\bx_{n+1}$ is obtained by generating a sample $\{\hat{\by}_{n+1}^k\}_{ k=1}^K$ 
and form
\begin{equation*}
    \ChatPCP(\bx_{n+1}) =\bigcup_{k = 1}^K\{\by: \|\by - \hat{\by}_{n+1}^k\|_2\leq s_{1-\alpha}\}, 
\end{equation*}
where $s_{1-\alpha}$ is the $\lceil (1-\alpha) (n_2 + 1) \rceil$-th smallest member of  $\{s_i, i\in I_2\}$. 
Note that each PCP prediction region is the union of $K$ balls. They have flexible, but often irregular, disconnected shapes, and are sensitive to the choice of $K$ and $\alpha$. This brings challenge to interpreting the regions. 

Next, the ST-DQR method was proposed by \citet{feldman2023calibrated}. It learns an r-dimensional latent representation of the output, e.g., using the conditional variational auto-encoder (CVAE). The latent variable is encouraged to follow a unimodal distribution, so that methods like the directional quantile regression (DQR) are applicable to form convex probability regions for it. Samples are generated in the output space corresponding to points in the latent region. Calibration and the final prediction region are created similarly to the PCP.
ST-DQR and our CONTRA are similar in that they both depend on latent representations, but differ substantially in the latent representation. CONTRA uses bijection and are often able to learn the latent variable to follow the Gaussian reference distribution rather closely. In contrast, the latent variable in ST-DQR is typically of lower dimension than the output, and its distribution is only coarsely similar to a reference. And this is why additional steps like DQR are needed to form latent probability regions in ST-DQR, but our CONTRA can directly use HDR of the reference Gaussian. When calibrating the regions, ST-DQR (and PCP) had to introduce yet another step: union balls around generated samples, and calibrate the common radius of the balls. Whereas CONTRA directly calibrate radius of the Gaussian HDR. Afterall, these methods have their strengths and weaknesses, but each is a valuable addition to the user's toolkit.

\subsection{One-dimensional conformal approaches and upgrading them to capture multi-dimensional outputs} 
The literature of conformal predictions for one-dimensional output is rather rich and we list a few state-of-the-art methods. Locally adaptive split conformal prediction \citep{lei2018distribution} enhances reliability by adapting to local data variability, while Dist-split  \citep{pmlr-v108-izbicki20a} constructs intervals based on the estimated cumulative density function. However, both methods tend to provide broader intervals when the underlying conditional density is not symmetric. Conformalized quantile regression (CQR) \citep{romano2019conformalized} offers asymmetric intervals using two conditional quantile estimators, but it remains inefficient for multi-modal distributions. This limitation can be addressed by CD-split \citep{izbicki2022cd}, which can generate discontinuous prediction intervals, but may encounter stability issues. 

Although these methods are designed for handling one-dimensional output, they can be building blocks for valid prediction regions for multi-dimensional outputs. Bonforonni method \citep{bonferroni1936teoria, dunn1961multiple}  is a simple but conservative way to do so. Alternatively,  we provide in section \ref{sec:mcqr} a new method called multi-target conformalized quantile regression (MCQR) that lifts traditional one-dimensional CQR to work for multi-dimensional outputs. 

\subsubsection{MCQR: Multi-target conformalized quantile regression}\label{sec:mcqr}

As in CQR, we train  on $D_1$ a pair of lower and upper quantile estimators, $\Qhat^l_j$ and $\Qhat^u_j$,  
 for each dimension of the output $\by=(y_1,\cdots, y_q)$. 
Given a weight vector $\bw=(w_{11}, w_{12},\cdots, w_{q1}, w_{q2})$, the choice of which to be discussed later, 
we define non-conformity of a point $(\bx, \by)$ as:
\begin{equation*}
s  = \max_{j=1, \hdots, \q} \left\{ w_{j1}(\Qhat^l_j(\bx) - y_{j}), w_{j2}(y_{j} - \Qhat^u_j(\bx)) \right\}\,,
\end{equation*}
Then we set the prediction region to be 
\begin{equation*}
 \ChatM(\bx_{n+1}) =\left \{ \by:    \Qhat^l_j(\bx_{n+1}) - \frac{s_{1-\alpha}}{w_{j1}} \leq  y_j \leq \Qhat^u_j(\bx_{n+1}) + \frac{s_{1-\alpha}}{w_{j2}} \,, j =1, \ldots, q\right\}\,. 
\end{equation*}
For any choice of the weight vector $\bw=(w_{11}, w_{12},\cdots, w_{q1}, w_{q2})$, the MCQR satisfies the marginal coverage guarantee defined in 
\eqref{mc}, which we prove in Appendix \ref{mcqr_proof}.

Finally, since $\ChatM(\bx_{n+1})$  is a box, its volume is simply
\begin{align*}\text{Vol}(\ChatM(\bx_{n+1})) =\prod_{j = 1}^{\q} \, \left[ \Qhat^u_j(\bx_{n+1})  - \Qhat^l_j(\bx_{n+1})+ \frac{s_{1-\alpha}}{w_{j2}}+ \frac{s_{1-\alpha}}{w_{j1}} \right] . \end{align*}
Coming back to the problem of how to  specify the weight vector $\bw$, one solution is to minimize the average volume of prediction regions in the calibration set, that is
\begin{equation*}
   \hat{\bw } =\text{arg } \min_{\bw}\frac{1}{n_2}\sum_{i\in I_2}\text{Vol}\left(\ChatM(\bx_{i})\right). 
\end{equation*}

\section{Experiments}\label{sec:simu}
In this section, we systematically compare the performance of the proposed \NFTC and ResCONTRA to that of other conformal prediction methods reviewed in section~\ref{sec:others}, including  \mbox{PCP}, NLE, \mbox{RCP}, MCQR,  Dist-split  and CQR. The last two were designed for one-dimensional outputs. We employ the Bonferroni approach to produce valid multi-dimensional regions, labeled as $\text{Dist-split}_\text{bon}$ and $\text{CQR}_\text{bon}$ respectively. Throughout the experiments, the miscoverage rate is set at $\alpha = 0.1$ for each prediction region, hence a nominal coverage rate of $90\%$. 

Experiments are conducted on four synthetic and six real datasets. For each real dataset, input variables $\bx$ were standardized before model training. 
 There are many ways to implement CNF, all of which would be compatible with our CONTRA method. Here, we applied RealNVP due to its simplicity and computational efficiency. Specifically, we used 6 to 10 coupling layers for each CNF. 
Each coupling layer (details in Appendix~\ref{clayer}) involves two neural networks, each including 2 hidden layers with 512 hidden units and ReLU activation function. 
Optimization is done with the Adam gradient descent method 
 \citep{Kingma2014AdamAM}
with a learning rate of $1 \times 10^{-3}$ for most of cases. Training epochs are mostly set to be  
200.  To implement ResCONTRA, we selected support vector regression as the predictive model. 

We also studied the impact of underfitting and overfitting on CONTRA regions, as well as the impact of data sizes. Practical guidelines are provided for interesting readers in Appendices~\ref{sec:ImpactOfFit} and \ref{sec:ImpactOfSize}.

All models were trained on an A30 GPU with 32 GB of memory. The training time for each CNF varied from 1 to 3 minutes, depending on the datasets and RealNVP structures used.

\subsection{Synthetic data analysis}\label{Syn}

We experimented with four setups where $\bY|\bX = \bx$ are mixture-Gaussian, spiral, moon and ring-shaped, respectively. 
We present results for the first two cases below while leave the latter two in the Appendix~\ref{extraSim} due to limited space. 
Data generation details can be found in Appendix~\ref{synthetic}. 
The sample sizes of the training, calibration and testing sets of the mixture Gaussian and spiral examples are 3375, 1125, and 500, respectively. We further split the training set for ResCONTRA into 60\% for a support vector regression model and 40\% for the first calibration to train a normalizing flow. For additional simulation studies, see Appendix \ref{extraSim}.

Table~\ref{mnpr} shows the empirical coverage probabilities of each method, averaged over 20 replications, with standard errors in the brackets.  Each replication is a different split between the proper training set and the calibration set, while the test set is fixed. All conformal methods achieved the nominal level of $0.9$. In both setups, CONTRA and PCP are based on the same CNF and generated the smallest two prediction regions. ResCONTRA delivered comparable results, with discrepancies largely because the ``user-chosen" . The other methods produced significantly larger regions.
\begin{table}[h]
    \centering
    \caption{\small Coverage and volume for 2-dimensional $90\%$ conformal regions in two synthetic datasets. Each table entry is the average of results over $20$ random splits, with standard error in the parentheses. The method that achieved the smallest volume is in boldface. 
    } 
    \resizebox{\columnwidth}{!}{%
\begin{tabular}{p{1.2cm}cccccccccc}
\toprule
& Metric & \NFTC & \RCONTRA&PCP &  NLE &   RCP &  MCQR &  $\text{Dist-split}_\text{bon}$ &   $\text{CQR}_{\text{bon}}$  \\
\midrule
\midrule
\multirow{2}{*}{Mixt.} & Coverage & \textbf{0.91(0.003)} & 0.90(0.003) & 0.91(0.003) & 0.89(0.003) & 0.90(0.002) & 0.89(0.003) &    0.90(0.002) & 0.90(0.002) \\
& Volume &  \textbf{64.92(1.015)} & 
90.94(1.727) & 71.32(0.845) & 129.88(10.299) & 135.12(12.378) & 109.51(0.412) &  112.38(0.575) & 112.30(0.563)    \\ 
\midrule
\midrule
 \multirow{2}{*}{Spiral} &Coverage &   0.91(0.003) &  0.90(0.003)&\textbf{0.91(0.002)}&0.91(0.003)& 0.91(0.002) & 0.91(0.003) & 0.91(0.003) &       0.91(0.002) \\
&Volume   &   23.70(1.622) & 41.73(2.275)& \textbf{23.22(1.154)}& 68.97(0.917) & 68.16(0.943) & 64.69	(0.199) & 65.82(0.157) &       65.42(0.202)\\
\midrule
\bottomrule
\end{tabular}}
\label{mnpr}
\end{table}
\begin{figure}[h]
    \centering

    \begin{subfigure}{\textwidth}
        \includegraphics[width=\textwidth, height=2.8cm]{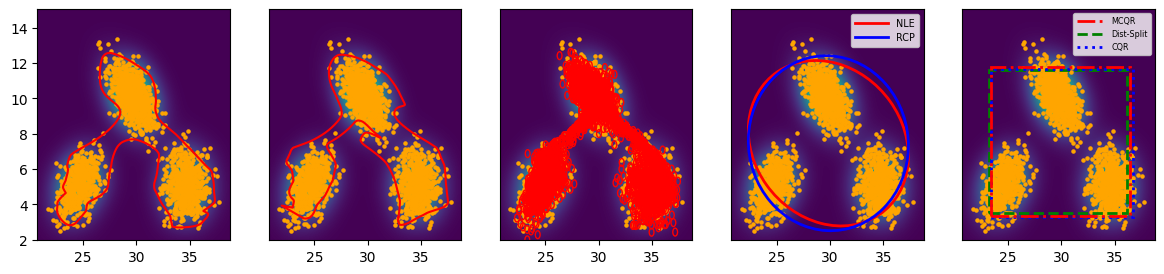}
    \end{subfigure}

    \begin{subfigure}{\textwidth}
        \includegraphics[width=\textwidth, height=2.8cm]{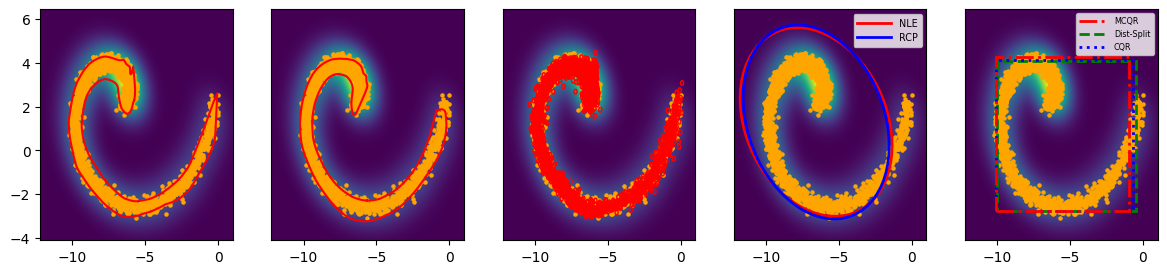}
    \end{subfigure}

    \vspace{2pt}
    \makebox[\textwidth][c]{%
        \begin{minipage}{0.19\textwidth}\centering \small (a) CONTRA \end{minipage}%
        \begin{minipage}{0.19\textwidth}\centering \small ~~~~~(b) ResCONTRA \end{minipage}%
        \begin{minipage}{0.19\textwidth}\centering \small ~~~~~~(c) PCP \end{minipage}%
        \begin{minipage}{0.19\textwidth}\centering \small ~~~~~~~~~~(d) Ellipses \end{minipage}%
        \begin{minipage}{0.21\textwidth}\centering \small~~~~~~~~~ (e) Rectangles \end{minipage}%
    }
    \caption{\small Prediction regions of a two-dimensional outcome given a specific $\bx$ value from the test set. Colored lines show the boundary of various prediction regions. In the case of PCP, $K=3000$ disks are shown. Orange points show a random sample of size 2000 from the true conditional distribution of $\by$ given $\bx$.}
    \label{fig:simCP}
\end{figure}

Beyond numerical metrics, we visualize and compare these conformal regions. Figure~\ref{fig:simCP} shows the prediction regions at an arbitrarily chosen $\bx$ value from the test set.  \NFTC, ResCONTRA and PCP construct prediction regions that align well with the true HDRs. The remaining four methods are limited to generating elliptical or rectangular regions, hence contain extensive areas of lower density. 
Among the top three performers, 
PCP constructs regions from a union of $K$ circles. While increasing $K$ improves the approximation to the true underlying distribution, it also raises computational costs and creates more complex boundaries. After all, the synthetic data experiments suggest that the proposed CONTRA and ResCONTRA methods outperforms others. They consistently achieve nominal coverage, capture true conditional density with flexible shapes, and maintain smooth, connected regions for robust inference.

\subsection{Real data analysis}

We next applied CONTRA, ResCONTRA and competing methods to form prediction regions for six datasets from the public-domain. Each dataset was partitioned into 60\% training, 20\% calibration, and 20\% testing. For ResCONTRA, the 60\% training portion was further split into 60\% for training and 40\% for the first calibration. Summaries of their dimensions and sizes are given in Table~\ref{structure}. 
The first example has a one-dimensional output, and we check if methods for multi-dimensional output work effectively in this special case.
Then, we took on examples with two-dimensional output with various input dimensions and training/calibration set sizes. Finally, we tackled a challenging task of density estimation for a four-dimensional output, a significant hurdle for traditional approaches.

 \begin{table}[h]
    \centering
    \caption{\small Summary of dataset structures in real data experiments.}
    \small
    \begin{tabular}{lcccccc}
        \toprule
        Dataset & dim($\bx$) & dim($\by$) & $n_1$ (training) & $n_2$ (calibration) & $n_3$ (test) \\
                \midrule
        Bio & 9 & 1 & 3000 & 1000 & 1000 \\
        Taxi & 2 & 2 & 3600 & 1200 & 1200 \\
        Energy & 8 & 2 & 460 & 154 & 154 \\
        2D RF  & 20 & 2 & 5403 & 1801 & 1801 \\
        SCM20D  & 61 & 2 & 3000 & 1000 & 1000 \\
        4D RF  & 20 & 4 & 5403 & 1801 & 1801 \\
        \bottomrule
    \end{tabular}
    \label{structure}
\end{table}

Briefly, \texttt{Bio}  focuses on the physicochemical properties of protein tertiary structures, derived from CASP 5-9 experiment \citep{misc_physicochemical_properties_of_protein_tertiary_structure_265}. It includes nine predictors that provide information on the structural and geometric properties of molecules, with the aim of predicting the size of the residue.
\texttt{Taxi} contains longitude and latitude details for pick-up and drop-off location in New York for the year 2016. 
The goal is to predict the (conditional) distribution of the drop-off location given any pick-up location. This is also the example featured in Figure~\ref{fig:maps}. 
\texttt{Energy} is used to train a predictive model for heating and cooling loads given eight building-related predictors like relative compactness, roof area, surface area and so on  \citep{TSANAS2012560}. \texttt{2D RF} and \texttt{4D RF} are based on the same river flow dataset, which comprises more than one year of 
hourly flow observations from eight sites within the Mississippi River network \citep{Spyromitros_Xioufis_2016}. Twenty observations of river network flows at various sites and past time points are used to predict the flows 48 hours into the future at 2 and 4 sites, respectively. 
\texttt{SCM20D} contains $5000$ records from the 2010 Trading Agent Competition in Supply Chain Management \citep{Spyromitros_Xioufis_2016}. The goal is to predict the mean price $20$-days into the future.

\begin{table}[h]
    \centering
        \caption{\small Coverage and volume for multi-dimensional $90\%$ conformal regions in seven real datasets. Each table entry is the average of results over $20$ random splits,  with standard error in the parentheses. The method that achieved the smallest volume is in boldface. 
    }
    \resizebox{\columnwidth}{!}{%
\begin{tabular}{p{1.2cm}cccccccccc}
\toprule
& Metric & \NFTC & \RCONTRA & PCP &   NLE &   RCP &  MCQR &  $\text{Dist-split}_\text{bon}$ &   $\text{CQR}_\text{bon}$ \\
\midrule
\midrule
\multirow{2}{*}{Bio} & Coverage & 0.90(0.002) &0.90(0.002) &\textbf{0.90(0.003)} &  \textbackslash &\textbackslash  & \textbackslash  & 0.89(0.004) & 0.90(0.002)\\
& Volume  &  13.23(0.383) & 12.64(0.178) & \textbf{12.54(0.072)} &\textbackslash &\textbackslash &\textbackslash & 13.49(0.132) &12.55(0.076)\\
\midrule
\multirow{2}{*}{Taxi} & Coverage & \textbf{0.89(0.002)} & 0.89(0.002)& 0.89(0.002) &  0.90(0.001)     &  0.90(0.002) &  0.89(0.002) &   0.90(0.002)&  0.91(0.002) \\
& Volume($\times 10^{-3}$)   & \textbf{8.71(0.290)} &9.06(0.281) & 8.95(0.132) &  10.77(0.188) & 12.21(0.262) & 10.86(0.283) &  14.65(0.378) & 12.41(0.266)\\
\midrule
\multirow{2}{*}{Energy} & Coverage & 0.87(0.006) & 0.87(0.009)&\textbf{0.88(0.006)} & 0.86(0.006) & 0.85(0.008) & 0.84(0.007) &        0.84(0.010) & 0.87(0.008)\\
& Volume   &   18.24(1.269) & 22.76(4.238)&\textbf{16.40(0.901)} & 19.14(1.982) & 26.04(2.888) & 25.39(2.238) &      32.12(2.383) & 27.73(2.343)   \\
\midrule
\multirow{2}{*}{2D RF} & Coverage &    \textbf{0.91(0.002)}&0.90(0.002)& 0.91(0.002) & 0.91(0.002) & 0.90(0.002) & 0.91(0.001) &       0.92(0.001) & 0.92(0.002)  \\
&Volume   &  \textbf{5.29(0.074)} & 10.34(0.207)& 7.19(0.164) & 15.50(0.307) & 23.85(0.644) & 11.92(0.185) &      12.14(0.187) & 12.30(0.190)  \\
\midrule
\multirow{2}{*}{SCM20D} & Coverage  &     \textbf{0.89(0.001)} &0.89(0.003) & 0.89(0.002) & 0.90(0.002) & 0.89(0.002) & 0.89(0.002) &       0.90(0.003) & 0.91(0.002)  \\
& Volume($\times 10^4$) & \textbf{6.30(0.120)} &8.91(0.207)& 6.97(0.154)  & 10.37(0.599) & 6.52(0.153) & 7.10(0.165)& 8.82(0.231) &  8.50(0.206)  \\
\midrule
\multirow{2}{*}{4D RF} & Coverage & \textbf{0.90 (0.003)} & 0.89 (0.002)&0.89 (0.003) & 0.89 (0.002) & 0.90 (0.002) & 0.89(0.002) &    0.91(0.001) &    0.92(0.002)  \\
&Volume($\times 10^2$) & \textbf{0.59(0.043)} &1.54(0.098) &1.13 (0.057) & 26.10(4.020) & 52.93(11.403) & 10.92(0.597) & 20.40 (1.219) & 12.68(0.665)\\
\midrule
\bottomrule
\end{tabular}}
\label{2dreal}
\end{table}

Table \ref{2dreal} reports the empirical coverage probability and volumn of  various prediction regions. The coverage rates are generally very accurate around the nominal level 0.9. Slightly lower coverage rates that range from $0.84$ to $0.88$ are observed across the different methods for the \texttt{energy} dataset.  This is not surprising given the \textit{marginal} coverage guarantee established for these conformal methods are meant to deliver the nominal coverage when averaged over all possible training and calibration sets. Hence, smaller training and calibration sizes led to a bigger chance of seeing large deviation from the nominal level and relatively low empirical coverage rate out of $20$ replications. 
In addition, for $\text{Dist-split}_\text{bon}$, a reduction in coverage is observed when the dimension of $\bx$ is high. 
For \texttt{Bio}, we see that CONTRA, ResCONTRA and PCP are as good as state-of-the-art methods designed for one-dimensional problems. In multi-dimensional experiments, \NFTC, ResCONTRA and PCP constructed considerably more compact prediction regions than their competitors, with \NFTC keeping the smallest volume  with most cases. Overall, the CONTRA-type methods excel in these numerical metrics and offer smoother regions than other top performers, providing a significant advantage.

\section{Summary and discussion}\label{sec:summary}


The main contribution of our work is the proposal of a new multi-dimensional conformal prediction method, CONTRA, that allows reliable conditional density estimation. We demonstrated with various examples that CONTRA surpasses other methods in  offering compact prediction regions with flexible shapes and smooth boundaries for easy interpretation.

Alongside CONTRA, we introduce two new methods, ResCONTRA and MCQR, catering to users with their own preferred methods. 
ResCONTRA extends any user-selected point predictors with valid prediction regions. 
 MCQR extends CQR, a popular one-dimensional conformal prediction method, to create  prediction boxes that aim for optimal compactness among box-shaped conformal regions.

Note that most conformal prediction methods are intrinsically one-dimensional: 
points in the calibration set are projected into a line via a properly defined one-dimensional conformal score. An empirical quantile of these scores serves as the threshold on the line, and is projected back to the output space to form prediction regions. Any choice of the projection will lead to marginal coverage guarantee under the exchangeability condition. 
Quality projections are key to generating desirable prediction regions, which exhibit properties like small volume, minimal disconnected sets, and smooth boundaries that allow robust and interpretable conclusions in real-world applications. 
CONTRA is an intuitive and effective approach for defining such projections. It leverages NF and CNF that possess latent representations of the output, and project points to a line based on the density of their latent representation with respect to the standard Gaussian. This reduces to using the distance of the latent representation to the origin in the $q$-dimensional space as the non-conformity score.


One challenge of CONTRA is it requires training a bijection that transforms $q$-dimensional output to its latent representation. Better trained bijection leads to more compact CONTRA regions. 
In theory, the ground truth bijection always exists \citep{bogachev2005triangular}. But in practice, more complex distributions with higher $q$ require more data points and increasingly complicated structures in an NF (or CNF) to learn the true bijection well \citep{durkan2019neural}. In the case where data size is small and NF (or CNF) is hard to train, one possible solution is to inspect the latent representations of the training and the calibration points and check their distribution against the standard multivariate Gaussian. Patterns of the deviation such as bias in certain directions, correlation among different directions, asymmetry/skewness, kurtosis and so on could potentially be modeled and utilized to adjust the shape of $\hat{E}$ to maintain $90\%$ empirical coverage and be transformed back to obtain adjusted prediction regions in the output space, all while retaining the coverage guarantee.

The current version of CONTRA uses a Gaussian base distribution for the latent representation. This limits its application to model continuous outcomes. Given the wide application of supervised learning with multi-dimensional discrete or mixed-type outcomes, there is an urgent need to perform uncertainty quantification in these settings by developing effective conformal prediction methods. Since there is a wealth of literature on multi-dimensional classification models as well as one-dimensional conformal methods for discrete outcomes, their confluence has the potential to drive rapid advancements in this direction. 

\bibliography{iclr2025_conference}
\bibliographystyle{iclr2025_conference}

\appendix

\section{Proof}\label{proof}

\subsection{Marginal coverage guarantee of CONTRA 
}\label{contra_proof}


\begin{proposition} (A simple Corollary to the marginal coverage theorem of split conformal methods)\label{thm:coverage}
    Suppose the sample points in $D_2$ and $(\bx_{n+1}, \by_{n+1})$ are exchangeable. Then for any CNF model $t_{\hat{\theta}}$, the corresponding conformal ball $\hat{E}$ based on $D_2$ satisfies
    \begin{align*} 
    \sP\left(\by_{n+1}\in t_{\hat{\theta}}( \hat{E}, \bx_{n+1}) \right) \geq 1 - \alpha.
\end{align*}
  Further,  if $\hat{\bz}_i \in \gZ_{\text{cal}}$ and $\hat{\bz}_{n + 1}$ are  almost surely distinct, the above probability is bounded above by   $1 - \alpha + \frac{1}{n_2 + 1}$.
\end{proposition}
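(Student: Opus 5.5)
We reduce the statement to the standard split-conformal quantile argument applied to the one-dimensional scores $s_i=\|\hat{\bz}_i\|_2$, $i\in I_2\cup\{n+1\}$, where $\hat{\bz}_{n+1}=t_{\hat{\theta}}^{-1}(\by_{n+1},\bx_{n+1})$.

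\textbf{Step 1 (reduce to latent space).} Since $t_{\hat{\theta}}(\cdot,\bx_{n+1})$ is a bijection from $\gZ$ onto $\gY_{\bx_{n+1}}$, we have
\[
\by_{n+1}\in t_{\hat{\theta}}(\hat{E},\bx_{n+1}) \iff \hat{\bz}_{n+1}\in\hat{E} \iff s_{n+1}\le r_{1-\alpha}.
\]
We only need the observed $\by_{n+1}$ to lie in $\gY_{\bx_{n+1}}$, which holds almost surely by the definition of the support. So it suffices to bound $\sP(s_{n+1}\le r_{1-\alpha})$.

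\textbf{Step 2 (exchangeability of scores).} Condition on $D_1$. Then $t_{\hat{\theta}}$ is a fixed measurable map, and $s(\bx,\by)=\|t_{\hat{\theta}}^{-1}(\by,\bx)\|_2$ is a fixed function applied to each point. Exchangeability of the points in $D_2\cup\{(\bx_{n+1},\by_{n+1})\}$ (conditional on $D_1$, which holds for a random split of i.i.d. or exchangeable data) therefore transfers to the scores $\{s_i\}_{i\in I_2}\cup\{s_{n+1}\}$.

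\textbf{Step 3 (quantile lemma).} Let $k=\lceil(1-\alpha)(n_2+1)\rceil$. If $k>n_2$, take $r_{1-\alpha}=\infty$ and the claim is trivial. Otherwise, $s_{n+1}\le r_{1-\alpha}$ (the $k$-th smallest calibration score) holds exactly when $s_{n+1}$ is among the $k$ smallest of the $n_2+1$ scores, with ties broken favorably. By exchangeability, the rank of $s_{n+1}$ is stochastically dominated by a uniform distribution on $\{1,\dots,n_2+1\}$, which gives
\[
\sP(s_{n+1}\le r_{1-\alpha})\ge \frac{k}{n_2+1}\ge 1-\alpha.
\]
This is exactly the inflation-of-quantiles lemma of \citet{romano2019conformalized}. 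Taking expectation over $D_1$ gives the marginal bound.

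\textbf{Step 4 (upper bound).} If the $\hat{\bz}$'s are almost surely distinct, then taking norms does not by itself guarantee distinct scores. I expect this to be the main subtlety: distinct vectors can have equal norms. The fix I would adopt is to read the hypothesis as almost-sure distinctness of the scores $\|\hat{\bz}_i\|$, which holds, for instance, when the $\hat{\bz}$'s have a continuous distribution. Under distinctness, the rank of $s_{n+1}$ is exactly uniform on $\{1,\dots,n_2+1\}$. Hence
\[
\sP(s_{n+1}\le r_{1-\alpha})=\frac{k}{n_2+1}<\frac{(1-\alpha)(n_2+1)+1}{n_2+1}=1-\alpha+\frac{1}{n_2+1},
\]
and conditioning on $D_1$ and averaging finishes the argument.
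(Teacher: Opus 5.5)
Your proof is correct and follows the paper's route: bijectivity of $t_{\hat{\theta}}(\cdot,\bx_{n+1})$ reduces the event to $\|\hat{\bz}_{n+1}\|_2\le r_{1-\alpha}$, and the inflation-of-quantiles lemma gives the bound; your conditioning on $D_1$ and your $k>n_2$ convention are careful details the paper leaves implicit. The paper's proof stops at the lower bound, and your Step 4 caveat is a genuine correction to the statement: $\hat{\bz}$'s uniform on a sphere are almost surely distinct, yet all scores tie and coverage equals $1$, so the upper bound really needs almost surely distinct norms, which holds e.g.\ for i.i.d.\ $\hat{\bz}$'s with a Lebesgue density (so ``continuous distribution'' in your Step 4 should mean absolutely continuous, not merely atomless).
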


\begin{proof}
Since $t_{\hat{\theta}}$ is trained by CNF, it is a  differentiable bijection.  Hence 
\begin{align*}
    \hat{\bz}_{n+1}\in \hat{E}  & \Longleftrightarrow t_{\hat{\theta}}(\hat{\bz}_{n+1}, \bx_{n+1})\in t_{\hat{\theta}}( \hat{E}, \bx_{n+1}) \\ & \Longleftrightarrow  \by_{n+1}\in t_{\hat{\theta}}( \hat{E}, \bx_{n+1}) \,.
\end{align*}
Therefore  \begin{align*}
    \sP[\by_{n+1}\in t_{\hat{\theta}}( \hat{E}, \bx_{n+1}) ]& =   \sP[\hat{\bz}_{n+1}\in \hat{E}] \geq 1 - \alpha
\end{align*}
\end{proof}

\subsection{Set boundaries remain boundaries under homomorphism }
\label{boundary_proof}

The following result is well-know, but we provide a  proof to be self-contained. Many background materials can be found in \citep{ james2000topology}.  Let $\gU$ and $\gV$ be two topological spaces. (They will be played by $\gZ$ and $\gY$ respectively in CNF.)

\begin{definition}\label{def:homo} 
 A function  $t: \gU \rightarrow \gV $  is called a homomorphism if $t$ is bijective and continuous, and $t^{-1}$ is continuous. 
\end{definition}

\newcounter{boundary_thm} 
\setcounter{boundary_thm}{\value{theorem}}
\begin{proposition}\label{thm:boundary}
   If $t: \gU \rightarrow \gV$ is a homomorphism and $E \subset \gU$, then $t(\partial E) = \partial t(E),$ where $\partial E$ is the boundary of $E$.
\end{proposition}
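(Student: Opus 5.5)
The plan is to use the characterization $\partial E = \overline{E}\setminus \operatorname{int}(E)$, where closure and interior are taken in $\gU$, and to show that a homeomorphism $t$ commutes with both operations. Once we know $t(\overline{E}) = \overline{t(E)}$ and $t(\operatorname{int}E) = \operatorname{int}\, t(E)$, bijectivity of $t$ finishes the argument:
\[
t(\partial E) = t(\overline{E}\setminus \operatorname{int}E) = t(\overline{E})\setminus t(\operatorname{int}E) = \overline{t(E)}\setminus \operatorname{int}\,t(E) = \partial t(E).
\]
The middle equality, image of a difference equals difference of images, holds because $t$ is injective.

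The first step is the closure identity. Since $t$ is continuous, $t^{-1}(\overline{t(E)})$ is a closed set containing $E$. Hence it contains $\overline{E}$, which gives $t(\overline{E}) \subseteq \overline{t(E)}$. Applying the same argument to the continuous map $t^{-1}$ and the set $t(E)$ gives $t^{-1}(\overline{t(E)}) \subseteq \overline{E}$. Applying $t$ to both sides, and using surjectivity, yields the reverse inclusion.

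The second step is the interior identity, done the same way with open sets. First, $t(\operatorname{int}E)$ is open, since it equals $(t^{-1})^{-1}(\operatorname{int}E)$ and $t^{-1}$ is continuous. It is also contained in $t(E)$, so $t(\operatorname{int}E)\subseteq \operatorname{int}\,t(E)$. Conversely, $t^{-1}(\operatorname{int}\,t(E))$ is open by continuity of $t$ and is contained in $E$, so it lies in $\operatorname{int}E$. Applying $t$ gives the reverse inclusion.

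No step is a real obstacle; the proof is a routine verification. The only point requiring care is to use continuity of $t^{-1}$, that is, openness of $t$, for the reverse inclusions. That is exactly where the homeomorphism hypothesis, rather than mere continuous bijectivity, is needed.
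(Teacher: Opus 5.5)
Your proof is correct and essentially the paper's argument. Both rest on $t(\overline{E})=\overline{t(E)}$ plus injectivity to pass $t$ through set operations. The paper writes $\partial E=\overline{E}\cap\overline{E^c}$ and gets the second factor by applying the closure identity to $E^c$ (using $t(E^c)=t(E)^c$), whereas you use $\overline{E}\setminus\operatorname{int}E$ and prove the dual interior identity directly. Your version is more self-contained, since the paper asserts the closure identity without proof. One minor slip in your closing remark: in the interior step it is the forward inclusion $t(\operatorname{int}E)\subseteq\operatorname{int}\,t(E)$, not the reverse one, that uses continuity of $t^{-1}$.
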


\begin{proof}
The boundary of $E$ in $\gU$ can be expressed by $\partial E = \overline{E} \cap \overline{E^c}$, where $\overline{E}$ denotes the closure of $E$ and $E^c$ denotes the complement of $E$.

Since $t$ is a homomorphism, we have
\begin{equation*}
    t(\overline{E}) = \overline{t(E)},~~ t(E^c) = (t(E))^c.
\end{equation*}

Therefore, 
\begin{equation*}
   t(\overline{E^c}) = \overline{t(E^c)} = \overline{(t(E))^c}.
\end{equation*}

Applying $t$ to the boundary of $E$ , we can obtain
\begin{equation*}
t(\partial E) = t(\overline{E} \cap \overline{E^c}) = t(\overline{E}) \cap t(\overline{E^c}) = \overline{t(E)} \cap \overline{(t(E))^c} = \partial t(E).
\end{equation*}

Hence, $t(\partial E) = \partial t(E)$.
\end{proof}

\subsection{ Guaranteed coverage proof of MCQR}\label{mcqr_proof}

\begin{proposition}\label{mcqrt1}
    Suppose  $(\bx_i, \by_i),~ i = 1, \hdots, n+1$, are exchangeable, then the prediction region $\ChatM(\bx_{n+1})$ constructed by the MCQR algorithm satisfies
 \begin{equation}\label{eq:mcqr}
    \sP\left[\by_{n+1}\in \ChatM(\bx_{n+1}) | (\bx_i, \by_i), i \in I_1\right] \geq 1 - \alpha\,.
 \end{equation}
 If the non-conformity scores, $s_i, i=1,\cdots, n+1$, are almost surely distinct, then the probability in the left hand side of \eqref{eq:mcqr} is also bounded above by $1 - \alpha + \frac{1}{n_2 + 1}$.
\end{proposition}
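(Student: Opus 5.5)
Our plan is to reduce the claim to the standard split-conformal quantile argument, conditioning on the proper training set $D_1$ throughout. Conditional on $\{(\bx_i,\by_i), i\in I_1\}$, the quantile estimators $\Qhat^l_j,\Qhat^u_j$ are fixed functions. We will also treat the weight vector $\bw$ as fixed, or as chosen symmetrically with respect to the calibration points and the test point. The score map
\[
(\bx,\by)\mapsto s(\bx,\by)=\max_{j=1,\hdots,\q}\{w_{j1}(\Qhat^l_j(\bx)-y_j),\,w_{j2}(y_j-\Qhat^u_j(\bx))\}
\]
is then a fixed deterministic function. Because exchangeability of all $n+1$ points is preserved after conditioning on the points indexed by $I_1$, the scores $s_i$, $i\in I_2$, together with $s_{n+1}=s(\bx_{n+1},\by_{n+1})$, form an exchangeable sequence of $n_2+1$ real numbers.

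The first step is to show the event identity
\[
\by_{n+1}\in\ChatM(\bx_{n+1})\iff s_{n+1}\le s_{1-\alpha}.
\]
This follows coordinatewise, using positivity of the weights. The inequality $w_{j1}(\Qhat^l_j(\bx)-y_j)\le s_{1-\alpha}$ is equivalent to $y_j\ge \Qhat^l_j(\bx)-s_{1-\alpha}/w_{j1}$. Likewise, $w_{j2}(y_j-\Qhat^u_j(\bx))\le s_{1-\alpha}$ is equivalent to $y_j\le \Qhat^u_j(\bx)+s_{1-\alpha}/w_{j2}$. A maximum is at most $s_{1-\alpha}$ exactly when every term is, so the two descriptions agree.

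The second step is the quantile lemma (inflation of quantiles, as already invoked for Proposition~\ref{thm:coverage}). By exchangeability, the rank of $s_{n+1}$ among the $n_2+1$ scores is stochastically no larger than a uniform draw on $\{1,\dots,n_2+1\}$, with ties broken favorably. Since $s_{1-\alpha}$ is the $k$-th smallest calibration score, where $k=\lceil(1-\alpha)(n_2+1)\rceil$, we get $s_{n+1}\le s_{1-\alpha}$ whenever the rank of $s_{n+1}$ is at most $k$. Hence
\[
\sP(s_{n+1}\le s_{1-\alpha}\mid D_1)\ge k/(n_2+1)\ge 1-\alpha.
\]
If $k>n_2$, we set $s_{1-\alpha}=\infty$ and the bound is trivial.

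For the upper bound, almost-sure distinctness makes the rank of $s_{n+1}$ exactly uniform. The coverage probability is then exactly $k/(n_2+1)$, which is less than $(1-\alpha)(n_2+1+1)/(n_2+1)=1-\alpha+1/(n_2+1)$.

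The main obstacle is the data-dependent choice $\hat\bw$, which is fitted on $I_2$. It breaks the fixed-score argument, because the calibration scores then depend on the calibration points asymmetrically with respect to the test point. We would first try to handle this by assuming $\bw$ is fixed, or chosen from $D_1$ (or from an independent split), and state this as the setting of the proposition. Failing that, we would argue symmetry: a $\bw$ chosen as a symmetric function of the augmented set $D_2\cup\{(\bx_{n+1},\by_{n+1})\}$ preserves exchangeability of the scores.
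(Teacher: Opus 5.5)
Your proof follows the same route as the paper: condition on $D_1$, show coordinatewise (using positive weights) that $\by_{n+1}\in\ChatM(\bx_{n+1})\iff s_{n+1}\le s_{1-\alpha}$, and apply the inflation-of-quantiles lemma to the exchangeable scores for both bounds. One small correction is that the upper-bound step should read $k/(n_2+1)<\bigl((1-\alpha)(n_2+1)+1\bigr)/(n_2+1)$, not $(1-\alpha)(n_2+2)/(n_2+1)$. Your caveat about $\hat{\bw}$ is well taken: the paper's proof tacitly treats $\bw$ as fixed given $D_1$, yet it recommends a volume-minimizing $\hat{\bw}$ fitted on $I_2$, which would break the score exchangeability both arguments rely on, so restricting to a fixed, $D_1$-chosen, or symmetrically chosen $\bw$ is the right setting for the result.
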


\begin{proof}
    First, we show that $\by \in \ChatM(\bx)$ is equivalent to $s\leq s_{1-\alpha}$.
\begin{align*}
   & s \leq s_{1-\alpha} \\
      \Longleftrightarrow & \max_{j=1, \hdots, \q} \left\{ w_{j1}\left(\Qhat^l_j(\bx) - y_{j}\right), w_{j2}\left(y_{j} - \Qhat^u_j(\bx)\right) \right\}\leq s_{1-\alpha} \\  \Longleftrightarrow & 
   \left \{\left(\Qhat^l_j(\bx) - y_{j}\right)\leq \frac{s_{1-\alpha}}{w_{j1}} \land \left(y_{j} - \Qhat^u_j(\bx)\right) \leq \frac{s_{1-\alpha}}{w_{j2}}   \;\text{for $j =1, \ldots, q$}\right \}\\ 
   \Longleftrightarrow & 
   \left\{ 
  \Qhat^l_j(\bx) - \frac{s_{1-\alpha}}{w_{j1}} \leq  y_{j} \leq \Qhat^u_j(\bx) + \frac{s_{1-\alpha}}{w_{j2}} \;\text{for $j =1, \ldots, q$}\right \}\,.
\end{align*}

Since non-conformity scores  $s_i, i \in I_2$ and $s_{n+1}$ are exchangeable.  Applying Lemma \ref{iql},  
\begin{equation*}
    \sP(s_{n+1} \leq s_{1-\alpha}|(\bx_i, \by_i): i\in I_1) \geq 1-\alpha.
\end{equation*}
If calibration residuals $s_i$ and $s_{n+1}$ are almost surely distinct,
\begin{equation*}
\sP(s_{n+1} \leq  s_{1-\alpha}|(\bx_i, \by_i): i\in I_1) \leq 1-\alpha + \frac{1}{n_2 + 1}\,.\end{equation*}

Taking expectation over $D_1$, the marginal coverage is guaranteed.
\end{proof}

\begin{lemma}
[Inflation of quantiles, \citet{romano2019conformalized}] \label{iql}
Suppose $\bZ_1, \hdots, \bZ_{n+1}$ are exchangeable random variables. For $\alpha \in (0, 1)$,
\begin{equation*}
\sP\{\bZ_{n+1}\leq \bZ_{(\lceil\alpha(n+1)\rceil,n)}\}\geq \alpha,
\end{equation*}
Where  $\bZ_{(\lceil\alpha(n+1)\rceil,n)}$ is $\lceil\alpha(n+1)\rceil$ smallest value in $\bZ_1, \hdots, \bZ_n$, or $\alpha$-th empeical quantile of $\bZ_1, \hdots, \bZ_n$. Moreover, if  the random variables  $\bZ_1, \hdots, \bZ_{n+1}$ are almost surely distinct, then also
\begin{equation*}
\sP\{\bZ_{n+1}\leq \bZ_{(\lceil\alpha(n+1)\rceil,n)}\}\leq \alpha + \frac{1}{n}.
\end{equation*}
\end{lemma}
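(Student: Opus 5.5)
The plan is the standard rank argument for split conformal prediction. Write $k=\lceil\alpha(n+1)\rceil$, and let $\bZ_{(k,n+1)}$ denote the $k$-th smallest value among all $n+1$ variables $\bZ_1,\hdots,\bZ_{n+1}$. If $k>n$, I adopt the usual convention $\bZ_{(k,n)}=+\infty$. In that case the event holds with probability $1$, which is at least $\alpha$. It is also at most $\alpha+\frac{1}{n}$, because $k=n+1$ forces $\alpha(n+1)>n$, i.e.\ $\alpha>1-\frac1{n+1}$. So I will assume $k\le n$ from here on.

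\textbf{Step 1: reduce to the full sample.} I will show the deterministic equivalence
\begin{equation*}
\bZ_{n+1}\le \bZ_{(k,n)} \iff \bZ_{n+1}\le \bZ_{(k,n+1)}.
\end{equation*}
\begin{itemize}
\item For ``$\Rightarrow$'': if $\bZ_{n+1}\le\bZ_{(k,n)}$, then at least $k$ of the full-sample values are at most $\bZ_{(k,n)}$, so $\bZ_{(k,n+1)}\le \bZ_{(k,n)}$. Moreover, adding $\bZ_{n+1}$, which lies at or below $\bZ_{(k,n)}$, makes the $k$-th smallest of the full sample at least $\bZ_{n+1}$.
\item For ``$\Leftarrow$'': if $\bZ_{n+1}\le \bZ_{(k,n+1)}$, then removing $\bZ_{n+1}$ from the sample can only raise order statistics. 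Hence $\bZ_{(k,n)}\ge \bZ_{(k,n+1)}\ge \bZ_{n+1}$.
\end{itemize}
This step needs some care with ties, so I will check it by counting how many values lie at or below a threshold, rather than by arguing with ranks.

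\textbf{Step 2: exchangeability.} The quantity $\bZ_{(k,n+1)}$ is a symmetric function of $(\bZ_1,\hdots,\bZ_{n+1})$. By exchangeability, $\sP(\bZ_i\le \bZ_{(k,n+1)})$ is therefore the same for every $i$. Averaging over $i$ gives
\begin{equation*}
\sP\{\bZ_{n+1}\le \bZ_{(k,n+1)}\}=\frac{1}{n+1}\,\mathbb{E}\Big[\#\{i\le n+1:\bZ_i\le \bZ_{(k,n+1)}\}\Big].
\end{equation*}
The count inside the expectation is always at least $k$. So the probability is at least $k/(n+1)\ge\alpha$, which is the lower bound.

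\textbf{Step 3: upper bound.} If the values are almost surely distinct, the count in Step 2 equals exactly $k$. The probability is then
\begin{equation*}
\frac{k}{n+1}<\frac{\alpha(n+1)+1}{n+1}=\alpha+\frac{1}{n+1}\le\alpha+\frac1n .
\end{equation*}

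\textbf{Main obstacle.} I expect the only real difficulty to be the tie-robust verification of the equivalence in Step 1. The other steps are routine once the symmetric threshold $\bZ_{(k,n+1)}$ is in place.
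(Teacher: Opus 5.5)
Your proof is correct. The paper gives no proof and only cites \citet{romano2019conformalized}, and your argument is the standard one behind that lemma: the tie-robust reduction $\bZ_{n+1}\le\bZ_{(k,n)}\iff\bZ_{n+1}\le\bZ_{(k,n+1)}$, then exchangeability applied to the symmetric threshold $\bZ_{(k,n+1)}$. It even yields the sharper bound $\alpha+\frac{1}{n+1}$, and your convention $\bZ_{(n+1,n)}=+\infty$ covers a case the stated lemma leaves undefined.
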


\section{Synthetic data structure}\label{synthetic} 
\begin{enumerate}
    \item A model with a  mixture Gaussian error term. 
 \begin{alignat*}{2}
        \begin{cases}
           Y_1 = 3 X_1^3 X_2 - 5 X_2^2 + 4 X_1 X_2 - 6 X_2 + 7+ \varepsilon_1\\
        Y_2=X_1 X_2 -X_2^3  + 3 X_1 X_2^2 + 8 + \varepsilon_2
        \end{cases},
\end{alignat*}

where $\bY = [Y_1,~ Y_2]^T, ~\bX = [X_1,~ X_2]^T \sim N(\pmb{\mu}, \mathbf{I}_2)$, $\pmb{\mu} = [-2.0, -1.5]^T$, and $\pmb{\varepsilon} = [\varepsilon_1, \varepsilon_2]^T \sim 0.3N\left([0, 0]^T, 0.5(\mathbf{I}_2 + \mathbf{J}_2)\right) +0.4 N\left([5, 5]^T, 1.5(\mathbf{I}_2 - \mathbf{J}_2)\right) + 0.3N\left([10, 0]^T, \mathbf{I}_2\right)$, $\mathbf{J}_2$ is a 2 by 2 matrix with all elements equal to one.

\item 
A model with a  spiral curve error term.
    \begin{alignat*}{2}
        \begin{cases}
           Y_1 = 2X_1^3 - 3X_2^2 + 5X_2 + X_1X_2+ \varepsilon_1\\
        Y_2 =X_1^2 X_2 - 4X_2^2 + 3X_1^2X_2 + 7 + \varepsilon_2
        \end{cases},
\end{alignat*}
where $\bX$ has the same structure as in Model 1;  $\varepsilon_1 \sim  N(\theta\cos(\theta), 0.2^2), \varepsilon_2 \sim  N(\theta \sin(\theta), 0.1^2),$ where $\theta \in (0, 2\pi)$.

\item 
 A model with a  moon curve error term.
The specification of $\bY$ and $\bX$ are the same as the previous setup,
And the error term follows a moon-shaped distribution, $\varepsilon_1 \sim  N(cos(\theta), 0.1^2), \varepsilon_2 \sim  N(sin(\theta), 0.1^2), ~\theta \in (0, \pi).$

\item A model with a  ring  error term. 
The specification of $\bY$ and $\bX$ are the same as the previous setup, $\varepsilon_1  = r\cos(\theta),\varepsilon_2  = r\sin(\theta)$, where $r^2 \sim U(r_{\text{inner}}^2, r_{\text{outer}}^2), \theta \sim U(0, 2\pi)$.

\end{enumerate}

\section{Coupling layer}\label{clayer} 


For a vector $\by\in \sR^{q}$, partition $\by$ into two subspaces: $(\by_{I_1},\by_{I_2})\in\sR^{q_1} \times \sR^{q-q_1} $ and a bijection function $g(\cdot): \sR^{q-q_1} \xrightarrow{} \sR^{q-q_1} $. Define 
\begin{alignat*}{2}
        \begin{cases}
            \bz_{I_1} = \by_{I_1}\\
            \bz_{I_2}  =  g(\by_{I_2};m(\by_{I_1}, \bx))
        \end{cases},
\end{alignat*}
In particular, let $g(\by_{I_2};m(\by_{I_1}, \bx)) = \by_{I_2}\odot \text{exp}(u(\by_{I_1}, \bx)) + v(\by_{I_1},\bx)$ \citep{Winkler2019LearningLW}, then the determinant of Jacobian is $\text{exp}(\sum_{j = 1}^{q-q_1} u(\by_{I_1}, \bx)_j)$. 

The transformation $t$ is composed by multiple coupling layers. To prevent the composition of two consecutive coupling layers from reducing to the identity function, we can switch the roles of the two subsets \citep{dinh2016density}. 
For instance, consider a scenario where we aim to transform $\by$ into $\bz$ via $\mathbf{w}$. In this case, we have:
\begin{alignat*}{2}
    & \begin{aligned} & \text{\textcircled{1}}
\begin{cases}
  \mathbf{w}_{I_1} = \by_{I_1}\\
  \mathbf{w}_{I_2}  =  g^{(1)}(\by_{I_2};m(\by_{I_1}, \bx))
  \end{cases}\\
  \end{aligned}
    & \hskip 6em &
  \begin{aligned}
  &\text{\textcircled{2}}
\begin{cases}
  \bz_{I_1}  =  g^{(2)}(\mathbf{w}_{I_1};m(\mathbf{w}_{I_2}, \bx))\\
 \bz_{I_2} = \mathbf{w}_{I_2}
  \end{cases}
  \end{aligned}
\end{alignat*}

\section{Additional Simulation Studies}\label{extraSim}

\begin{table}[H]
    \centering
    \caption{\small Coverage and volume for 2-dimensional $90\%$ conformal regions in two synthetic datasets. Each table entry is the average of results over $20$ random splits, with standard error in the parentheses. The method that achieved the smallest volume is in boldface. 
    } 
    \resizebox{\columnwidth}{!}{%
\begin{tabular}{p{1.2cm}cccccccccc}
\toprule
& Metric & \NFTC & \RCONTRA&PCP &  NLE &   RCP &  MCQR &  $\text{Dist-split}_\text{bon}$ &   $\text{CQR}_{\text{bon}}$  \\
\midrule
 \multirow{2}{*}{Moon} &Coverage &   \textbf{0.91(0.002)} & 0.90(0.003)&0.91(0.003)& 0.91(0.003) & 0.91(0.003) & 0.90(0.003) &       0.91(0.003) & 0.91(0.003)  \\
&Volume  &   \textbf{1.56(0.020)} &2.65(0.062)& 1.66(0.015) & 3.32(0.040) & 3.28(0.037) & 2.50(0.014) &       2.63(0.012) & 2.62(0.013) \\
\midrule
 \multirow{2}{*}{Ring} &Coverage &   0.90(0.004) & 0.91(0.002)&\textbf{0.90(0.002)}&0.91(0.003)& 0.91(0.003) & 0.89(0.004) & 0.89(0.003) &       0.89(0.003) \\
&Volume ($\times 10^2$)  &   3.00(0.217) & 4.57(0.405) &\textbf{1.97(0.030)}& 5.53(0.875) & 6.14(1.537) & 5.17(0.009) & 5.19(0.010) &       5.19(0.009)\\
\midrule
\bottomrule
\end{tabular}}
\label{moon_ring}
\end{table}

\begin{figure}[H]
    \centering
    \begin{subfigure}{0.21\textwidth}
        \includegraphics[width=\textwidth, height =  2.8cm]{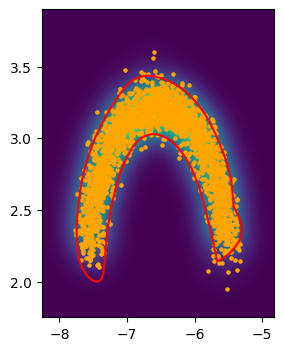}
    \end{subfigure}
    \begin{subfigure}{0.19\textwidth}
        \includegraphics[width=\textwidth, height =  2.8cm]{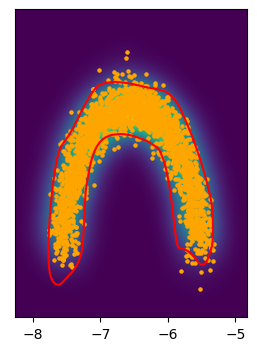}
    \end{subfigure}
    \begin{subfigure}{0.19\textwidth}
       \includegraphics[width=\textwidth, height =  2.8cm]{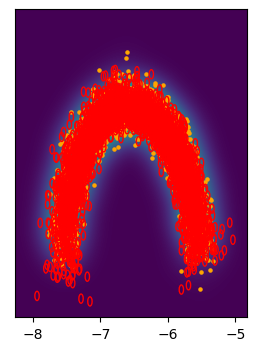}
    \end{subfigure}
    \begin{subfigure}{0.19\textwidth}
       \includegraphics[width=\textwidth, height = 2.8cm]{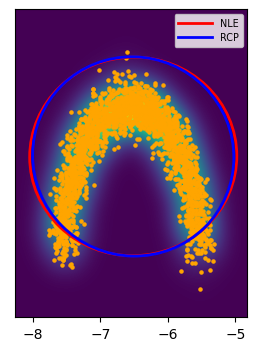}
    \end{subfigure}
    \begin{subfigure}{0.19\textwidth}
       \includegraphics[width=\textwidth, height =  2.8cm]{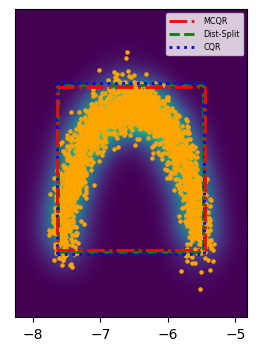}
    \end{subfigure}
    
     \begin{subfigure}{0.21\textwidth}
        \includegraphics[width=\textwidth, height = 2.8cm]{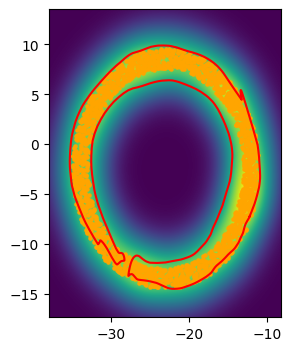}
        \vspace{-15pt} 
        \caption{CONTRA}
    \end{subfigure}
    \begin{subfigure}{0.19\textwidth}
        \includegraphics[width=\textwidth, height =  2.8cm]{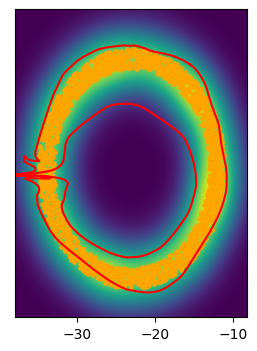}
        \vspace{-15pt} 
        \caption{ResCONTRA}
    \end{subfigure}
    \begin{subfigure}{0.19\textwidth}
       \includegraphics[width=\textwidth, height =  2.8cm]{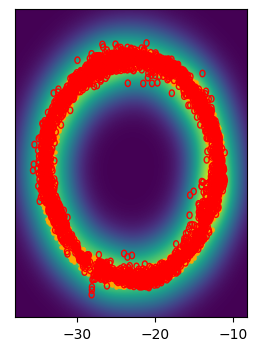}
       \vspace{-15pt} 
        \caption{PCP}
    \end{subfigure}
    \begin{subfigure}{0.19\textwidth}
       \includegraphics[width=\textwidth, height =  2.8cm]{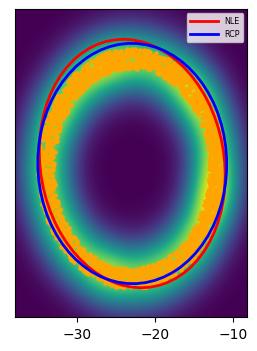}
       \vspace{-15pt} 
        \caption{Ellipse }
    \end{subfigure}
    \begin{subfigure}{0.19\textwidth}
       \includegraphics[width=\textwidth, height =  2.8cm]{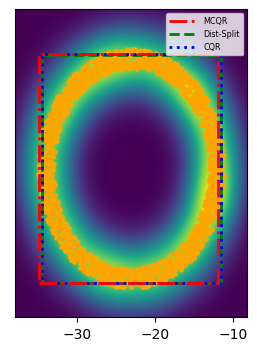}
           \vspace{-15pt}  
        \caption{\small Rectangle   } 
    \end{subfigure}
    \caption{\small  Prediction regions of a two-dimensional outcome given a specific $\bx$ value from the test set. Colored lines show the boundary of various prediction regions. 
    In the case of PCP,  $K=3000$ disks are shown. Orange points show a random sample of size 2000 from the true conditional distribution of $\by$ given $\bx$. }
\label{fig:simCP2}
\end{figure}

\section{The impact of underfitting and overfitting on CONTRA}\label{sec:ImpactOfFit}

\begin{figure}[h]
    \centering
   \begin{subfigure}{0.18\textwidth}
        \includegraphics[width=\textwidth, height=2.5cm]{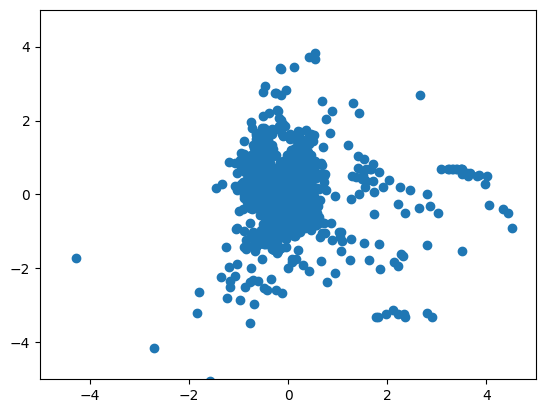}
        \caption{Over-dispersed}
    \end{subfigure}%
    \hspace{0.05\textwidth}%
    \begin{subfigure}{0.18\textwidth}
        \includegraphics[width=\textwidth, height=2.5cm]{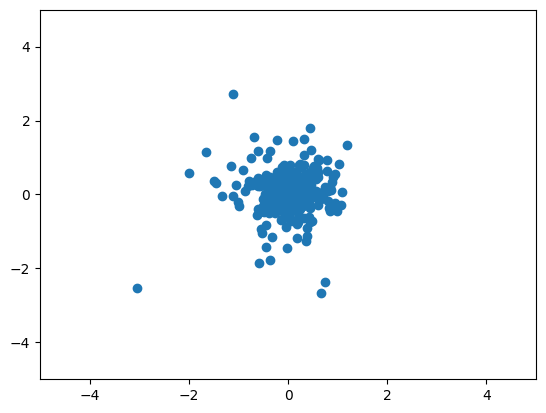}
        \caption{Under-dispersed}
    \end{subfigure}%
    \hspace{0.05\textwidth}%
    \begin{subfigure}{0.18\textwidth}
        \includegraphics[width=\textwidth, height=2.5cm]{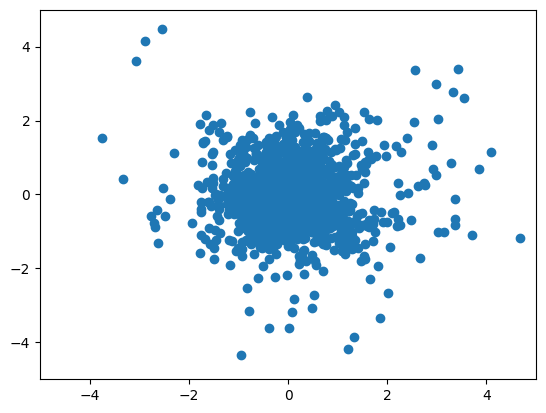}
        \caption{Nearly Gaussian}
    \end{subfigure}
    \\
      \begin{subfigure}{0.18\textwidth}
    \includegraphics[width=\textwidth, height = 3cm, trim={0cm 10cm 10cm 0cm}, clip]{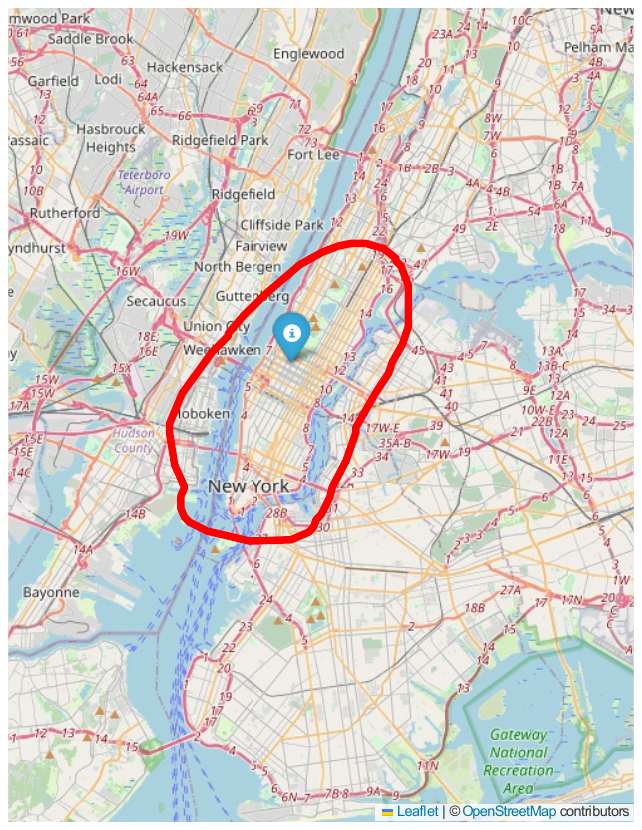}
               \vspace{-35pt} 
        \caption{ Underfitting }
    \end{subfigure}
\hspace{0.045\textwidth}%
 \begin{subfigure}{0.18\textwidth}
    \includegraphics[width=\textwidth, height = 3cm, trim={0cm 10cm 10cm 0cm}, clip]{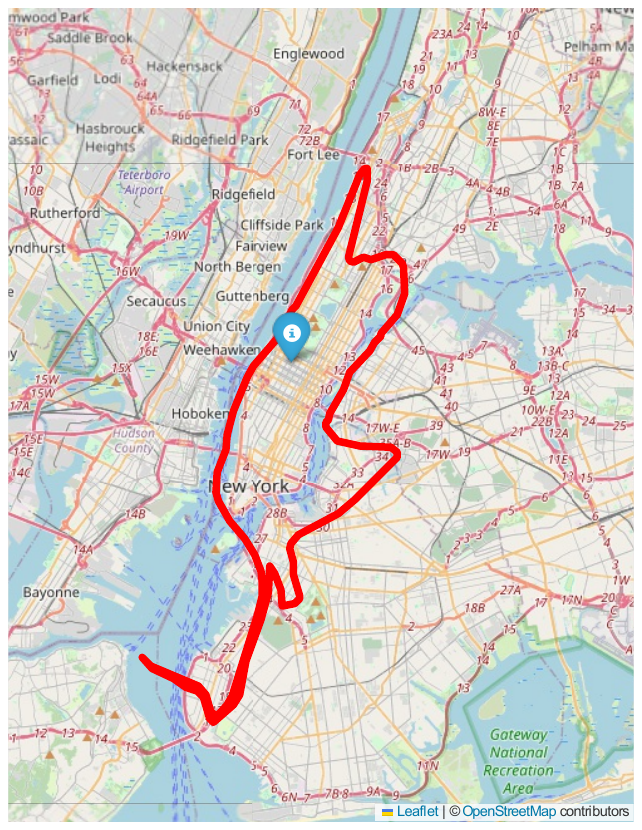}
        \vspace{-35pt} 
        \caption{ Overfitting }
    \end{subfigure}
    \hspace{0.045\textwidth}%
    \begin{subfigure}{0.18\textwidth}
    \includegraphics[width=\textwidth, height = 3cm, trim={0cm 10cm 10cm 0cm}, clip]{fig/nyc_updated_contra.pdf}
        \vspace{-35pt} 
        \caption{ Regular fitting}
    \end{subfigure}
    \caption{\small CONTRA prediction regions and latent $\bz$ for the calibration set under three different NF models, for drop-off location given a specific pickup coordinate (blue pin) for the NYC taxi data. Data size equals 4800, with a 75\%-25\% training-calibration split. 
    (a) and (d): An underfitting NF with 2 coupling layers and 16 hidden units per layer, trained for 50 epochs in 4 seconds. (b) and (e): An overfitting NF with 16 coupling layers and 1024 hidden units, trained for 500 epochs in 205 seconds. (c) and (f):  A regular fitting NF with 6 coupling layers and 256 hidden units, trained for 200 epochs in 17 seconds.} 
\label{fig:uder_overfitting}
\end{figure}

Using a Normalizing Flow model that is either too complex or too simple can result in conformal regions that are overly sensitive or excessively large. A straightforward guideline to mitigate these overfitting and underfitting issues is to evaluate how closely the latent variables $\bz$ from the calibration set resemble a random sample from the standard Gaussian. The top row of Figure~\ref{fig:uder_overfitting} provides a visual check for bias, overdispersion, or underdispersion, from which users can tell that the trained model corresponding to the rightmost picture had the best out-of-sample performance among the three, and would be the best choice for constructing CONTRA regions. This is confirmed in the bottom row of Figure~\ref{fig:uder_overfitting} . In addition to the visual check, classical metrics can be used to quantify the deviation between the $\bz$ sample and the standard Gaussian, aiding in model tuning.

\section{Impact of Data Size on CONTRA and its main competitor}\label{sec:ImpactOfSize}

\begin{figure}[h]
    \centering
    \begin{subfigure}{0.22\textwidth}
    \includegraphics[width=\textwidth, height = 3.8cm, trim={0cm 10cm 10cm 0cm}, clip]{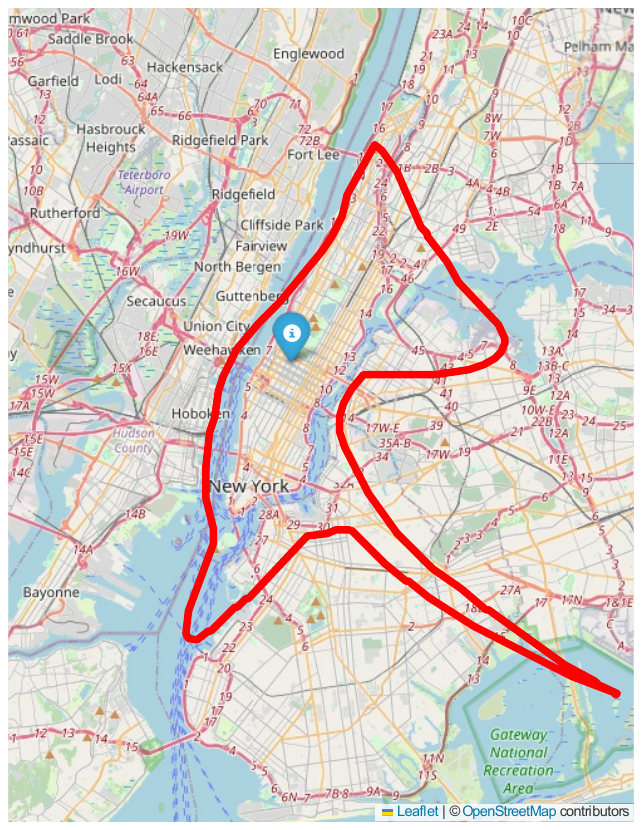}
    \vspace{-35pt} 
    \end{subfigure}
     \hspace{0.05\textwidth}%
    \begin{subfigure}{0.22\textwidth}
       \includegraphics[width=\textwidth,height = 3.8cm, trim={0cm 10cm 10cm 0cm}, clip]{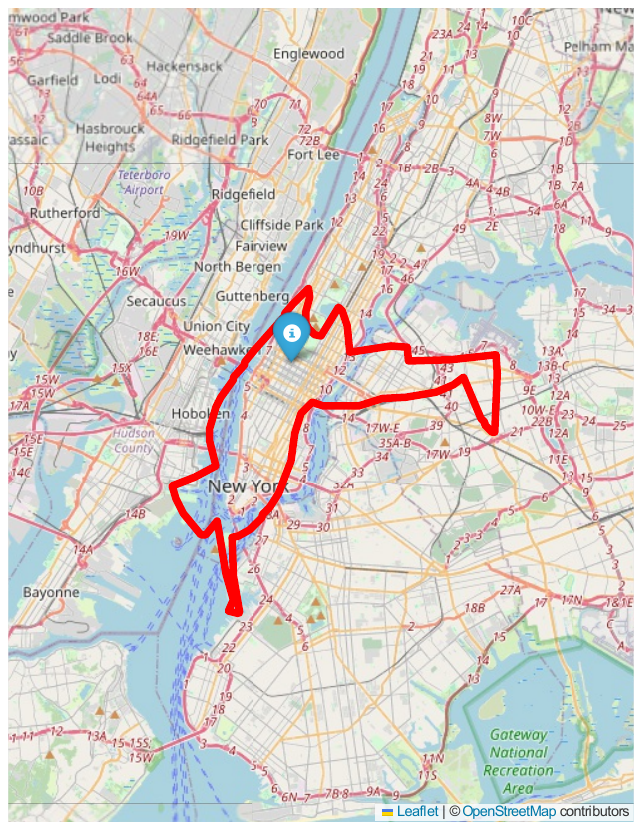}
       \vspace{-35pt} 
    \end{subfigure}
     \hspace{0.05\textwidth}%
    \begin{subfigure}{0.22\textwidth}
        \includegraphics[width=\textwidth,height = 3.8cm, trim={0cm 10cm 10cm 0cm}, clip]{fig/nyc_updated_contra.pdf}
        \vspace{-35pt} 
    \end{subfigure}
    \\
      \begin{subfigure}{0.22\textwidth}
        \includegraphics[width=\textwidth,height = 3.8cm, trim={0cm 10cm 10cm 0cm}, clip]{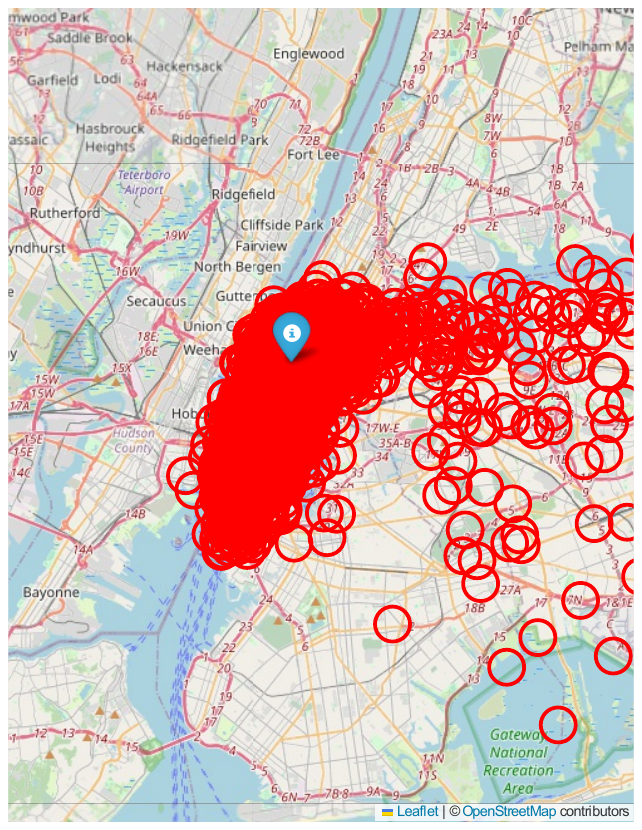}
        \vspace{-35pt} 
        \caption{\tiny $(n_1,n_2)=(225, 75)$}
    \end{subfigure}
     \hspace{0.05\textwidth}%
      \begin{subfigure}{0.22\textwidth}
        \includegraphics[width=\textwidth,height = 3.8cm, trim={0cm 10cm 10cm 0cm}, clip]{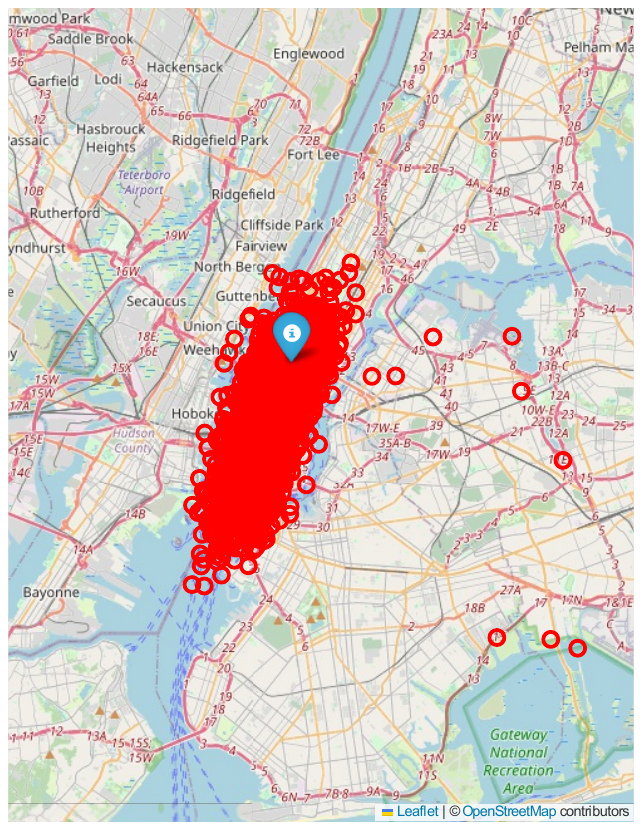}
        \vspace{-35pt} 

        \caption{ \tiny $(n_1,n_2)=(900, 300)$}
    \end{subfigure}
     \hspace{0.05\textwidth}%
       \begin{subfigure}{0.22\textwidth}
        \includegraphics[width=\textwidth,height = 3.8cm, trim={0cm 10cm 10cm 0cm}, clip]{fig/nyc_map_stdqr.pdf}
        \vspace{-35pt} 
        \caption{\tiny $(n_1, n_2) = (3600, 1200)$}

    \end{subfigure}\\
    \caption{\small The impact of data size on prediction regions of drop-off location given a specific pickup coordinate (blue pin) for the NYC taxi data. The top row shows results of our proposed CONTRA method; the bottom row shows results of ST-DQR based on a diffusion model. 
    Data sizes used for each column are 300, 1200, and 4800, respectively, with a 75\%-25\% training-calibration split. 
    }
\label{fig:taxidatasize}
\end{figure}
We examined the performance of CONTRA and the ST-DQR based on the diffusion model across various sample sizes, with a particular focus on smaller sample sizes that pose challenges for both NF and diffusion model training. The resulting prediction regions for both methods were consistent with expectations and appeared reasonable. As the sample size increased from very small to moderate, uncertainty decreased, leading to smaller prediction regions. However, as the sample size continued to grow, the prediction region stabilized, remaining approximately the same size, which correctly reflects the inherent uncertainty in predicting outcomes for new subjects.

\section{A comparison between CONTRA and ResCONTRA}\label{sec:compareCONTRAs}

This section compares the two proposed methods, CONTRA and ResCONTRA. We use empirical examples to confirm the following heuristics: When NF is effective at learning $\by|\bx$, the one-step CONTRA method generally outperforms the two-step ResCONTRA due to its larger training set. Whereas when the relationship between $\by$ and $\bx$, such as $\E(\by|\bx)=f(\bx)$, is highly complex but the residual $\by-f(\bx)$ follows a relatively simple distribution, ResCONTRA tends to provide better prediction regions. This is because ResCONTRA divides the original training data into two subsets. One subset is used to train an estimator for $f(\bx)$, using any preferred method that specializes for this task of \textit{point estimation}. The second subset is then used to model the distribution of the residual using a relatively simple NF.

Above, we provided a general guideline for choosing between CONTRA and ResCONTRA. In practice, users don’t need to decide in advance. They can try both methods and use the method that lead to the prediction region that better suits their needs.

\begin{figure}[h]
    \centering
    \begin{minipage}{0.17\textwidth}
    \includegraphics[width=\textwidth, height = \textwidth]{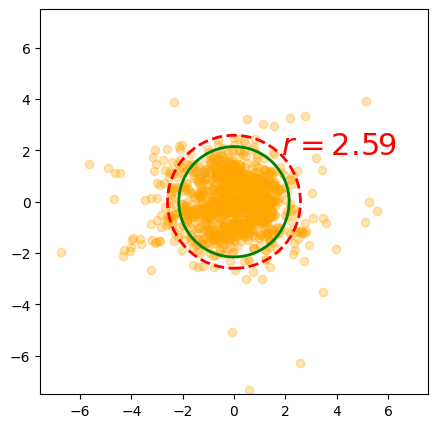}
    \end{minipage}
    \begin{minipage}{0.17\textwidth}
       \includegraphics[width=\textwidth, height = \textwidth]{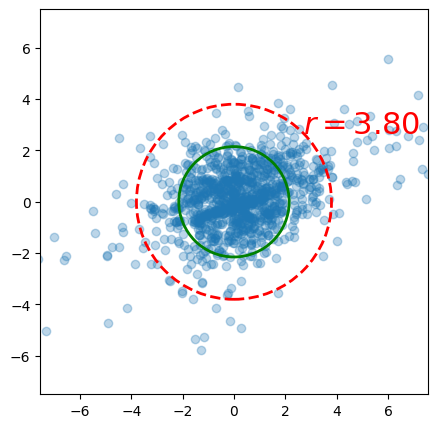}
    \end{minipage}
        \begin{minipage}{0.17\textwidth}
       \includegraphics[width=\textwidth, height = \textwidth]{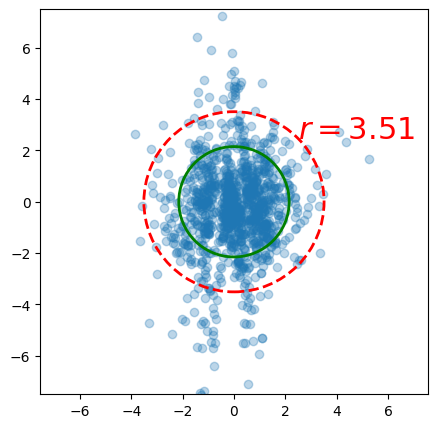}
    \end{minipage}
    \begin{minipage}{0.17\textwidth}
       \includegraphics[width=\textwidth, height = \textwidth]{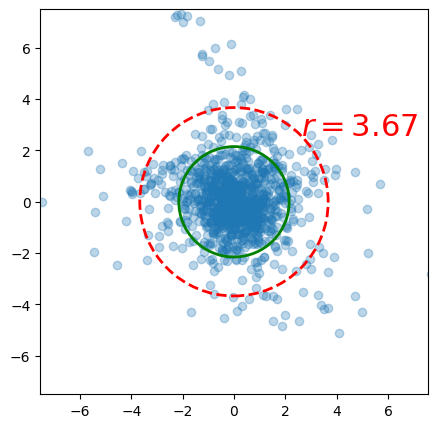}
    \end{minipage}
     \begin{minipage}{0.17\textwidth}
       \includegraphics[width=\textwidth, height = \textwidth]{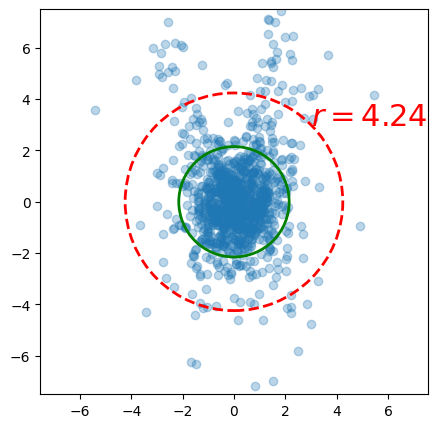}
    \end{minipage}\\

\begin{minipage}[c]{0.17\textwidth}
    \centering
    \includegraphics[width=\textwidth,height = \textwidth]{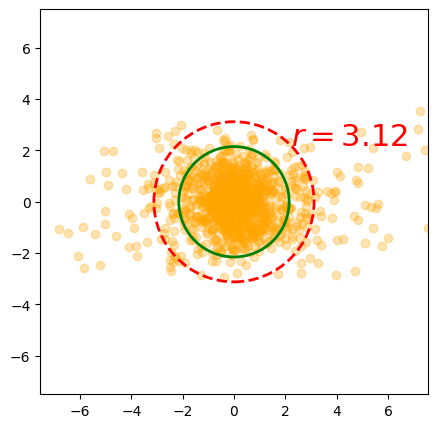}
    \subcaption{CONTRA}
\end{minipage}
\begin{minipage}{0.17\textwidth}
    \includegraphics[width=\textwidth, height = \textwidth]{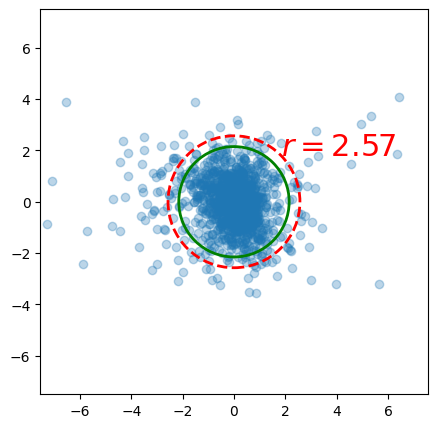}
    \subcaption{Res-SVR-10}
    \end{minipage}
    \begin{minipage}{0.17\textwidth}
    \includegraphics[width=\textwidth, height = \textwidth]{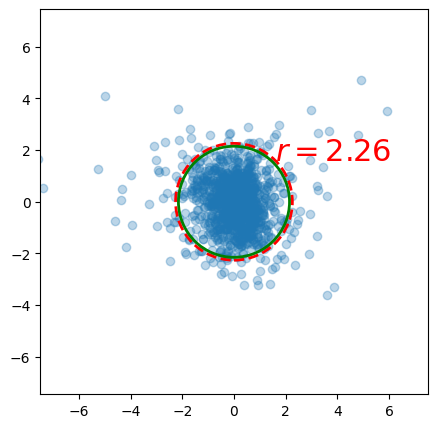}
    \subcaption{Res-SVR-6}
    \end{minipage}
        \begin{minipage}{0.17\textwidth}
    \includegraphics[width=\textwidth, height = \textwidth]{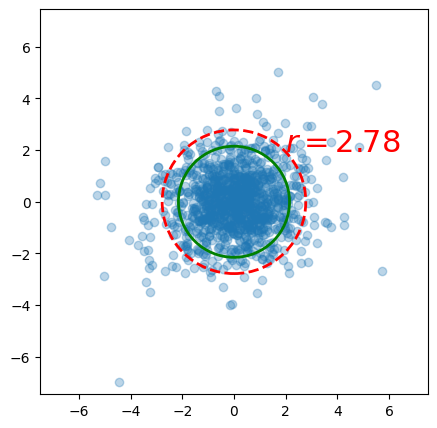}
    \subcaption{Res-XGB-10}
    \end{minipage}
        \begin{minipage}{0.17\textwidth}
    \includegraphics[width=\textwidth, height = \textwidth]{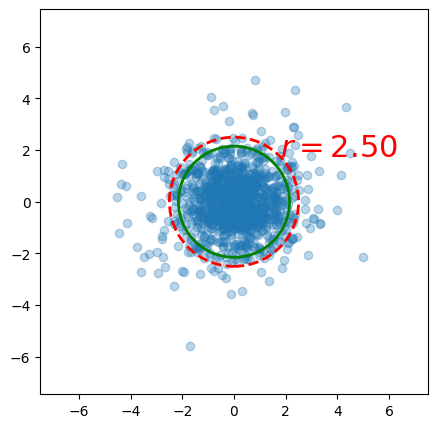}
    \subcaption{Res-XGB-6}
    \end{minipage}
    \caption{\small 
    Comparing CONTRA to ResCONTRA in terms of the latent variable, $\bz$, of the calibration set on two different datasets. Plots of $\bz$ that closely resemble the standard bivariate Gaussian distribution result in smaller calibrated radii, $r_{.9}$, and better conformal prediction regions. Five conformal prediction methods are implemented: (a) CONTRA with a 10-layer NF; (b) and (c) ResCONTRA with SVR followed by 10- and 6-layer NFs, respectively; (d) and (e) ResCONTRA with XGBoost followed by 10- and 6-layer NFs, respectively. For CONTRA, the sizes for training, calibration, and testing are (3375, 1125, 500). For ResCONTRA, the training set was further split into 60\% to train a point estimator and 40\% to train a NF. 
    }
\label{fig:z_plot}
\end{figure}

\begin{table}[h]
    \centering
    \begin{tabular}{cccccc}
        \hline
        &CONTRA & SVR-10 & SVR-6 & XGB-10 & XGB-6   \\
        \hline
        Mixture-Gaussian (\%)    & 0.71  &  2.67  &   1.69  & 2.67 &  3.38 \\
        Multiplicative Gaussian (\%)   &  1.24 & 0.27   & 0.09   & 0.00  & 0.00  \\
        \hline
    \end{tabular}
    \caption{The percentage of points out of the $[-7.5,7.5]\times[-7.5,7.5]$ range.}
    \label{tab:sample_table}
\end{table}
 For illustration, we apply CONTRA and four different implmentations of ResCONTRA to two examples. The first example is the same mixture-Gaussian example in Section~\ref{Syn}. And the second example specifies a more complex relationship between $\by$ and $\bx$, described in \eqref{eq:ex2}. For each method that learns the conditional distribution of $\by$ given $\bx$, we examine values of the latent variable,
  $\bz$, of the calibration set. Resemblance of the distribution of $\bz$ to the bivariate standard normal indicates good out-of-sample learning of the distribution of $\by|\bx$. Figure~\ref{fig:z_plot} displays the values of $\bz$ for the two examples in the top and bottom rows, respectively. All $10$ plots show the $[-7.5, 7.5]\times[-7.5, 7.5]$ region of the latent space, while Table~\ref{tab:sample_table} shows the proportions of $\bz$ falling outside the shown range. Overlaying on each plot is the theoretical 90\% highest density region of the bivariate standard Gaussin (in green) and the smallest circle centered at the origin that contains $90\%$ of the $\hat{\bz}$ of the calibration set, with radius denoted by $r$. From the first row of Figure~\ref{fig:z_plot} and Table~\ref{tab:sample_table}, we can see that CONTRA (a) is the best performer for the example with relatively simple model and more complicated error; and from the second row, we can see that ResCONTRA methods, especially (c) and (e) that combine tools specialized for point estimation and relatively simple NF for the residuals, are the best performers to capture the complex relationship between $\by$ and $\bx$ while providing small $r$ that leads to small conformal regions.

\textbf{Details of the relatively complex dataset}.

This is the dataset that corresponds to the results in the bottom row of Figure~\ref{fig:z_plot} and Table~\ref{tab:sample_table}. 
 \begin{alignat}{2}
        \begin{cases}
            Y_1 = 
        2X_1^2  e_1e_2 
        - 3 X_2  
        + 0.5X_3^3 
        + X_4 X_5 e_2  
        - 1.5 X_6^2
        + 0.7 X_7X_8^2 
        - 0.3 X_9e_1
        + \sin(X_{10})
        + 5\\
        Y_2 = 
-X_1^3  
+ 4X_2^2  
- X_3X_4e_2  
+ 0.8X_5^2  
- 2X_6X_7e_1e_2  
+ 0.6X_8  
- 1.2X_9^3e_1^2  
+ \cos(X_{10})  
+ 7
\end{cases},         \label{eq:ex2}
\end{alignat}
where $\bY = [Y_1,~ Y_2]^T, ~\bX \sim N(\pmb{\mu}, \mathbf{I}_{10})$,  $\pmb{\mu}$ is a 10-dimensional mean vector with each component independently drawn from the Uniform$[-10, 10]$ distribution, and $\pmb{\varepsilon} = [\varepsilon_1, \varepsilon_2]^T \sim N\left({\bf 0}, \mathbf{I}_2\right). $

\section{Agreement between CONTRA prediction regions and HPDs}

\begin{figure}[H]
    \centering
    \begin{subfigure}[b]{0.17\textwidth}
        \includegraphics[width=\textwidth]{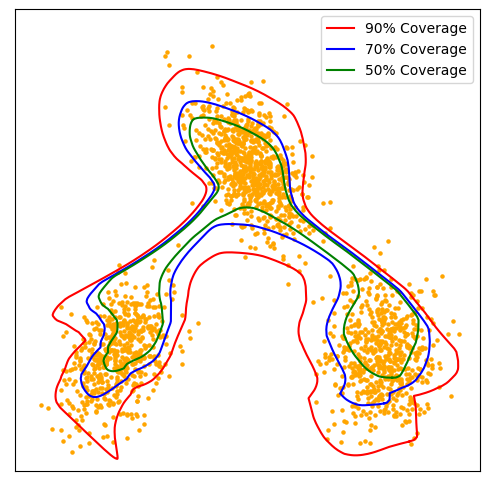}
        \caption{Mixt. Gaussian}
        \label{fig:1}
    \end{subfigure}
    \hfill
    \begin{subfigure}[b]{0.17\textwidth}
        \includegraphics[width=\textwidth]{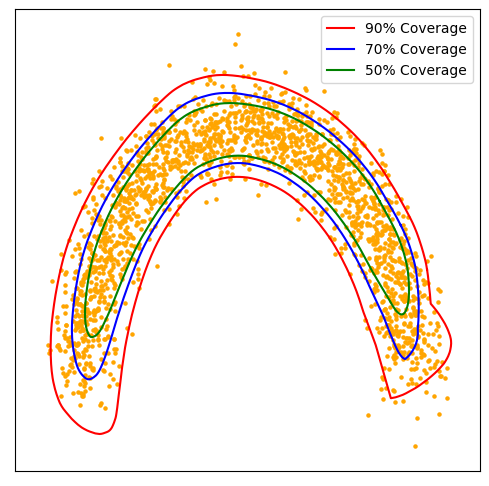}
        \caption{Moon}
        \label{fig:2}
    \end{subfigure}
    \hfill
    \begin{subfigure}[b]{0.17\textwidth}
        \includegraphics[width=\textwidth]{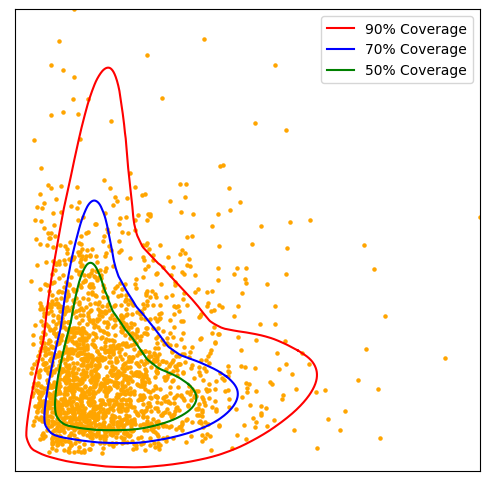}
        \caption{Gamma}
        \label{fig:3}
    \end{subfigure}
    \hfill
    \begin{subfigure}[b]{0.17\textwidth}
        \includegraphics[width=\textwidth]{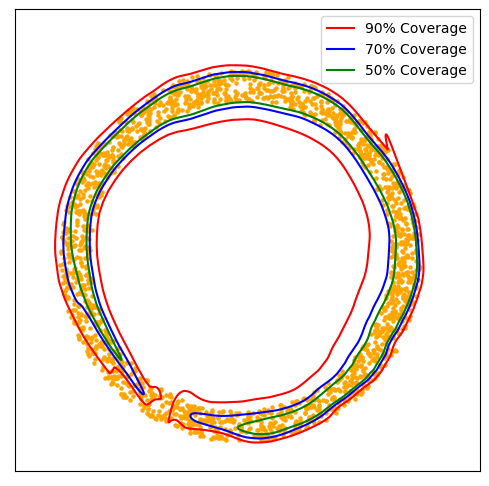}
        \caption{Ring}
        \label{fig:4}
    \end{subfigure}
    \hfill
    \begin{subfigure}[b]{0.17\textwidth}
        \includegraphics[width=\textwidth]{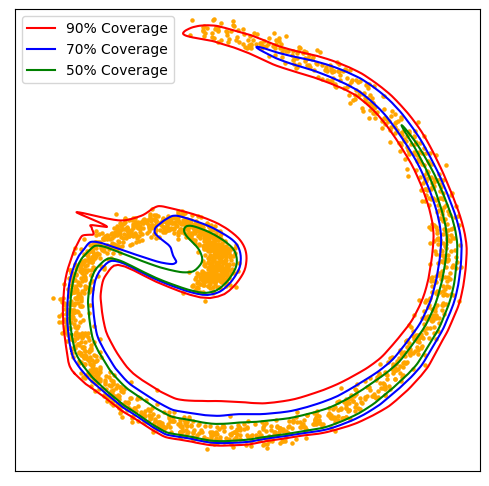}
        \caption{Spiral}
        \label{fig:5}
    \end{subfigure}
    
    \caption{\small CONTRA conformal regions of the output variable $\by$ given some fixed $\bx$ value for five setups with coverage levels $50\%, 70\%$ and $90\%$, respectively. The orange points are random samples of size $2000$ from each of the true conditional distribution of $\by$ given $\bx$. We can see the CONTRA regions of various levels properly capture the high-density regions in each case.}
    \label{3coverv3}
\end{figure}

\end{document}

%% file: math_commands.tex
\usepackage{amsmath,amsfonts,bm}

\def\eqref#1{equation~\ref{#1}}

\def\1{\bm{1}}

\def\mY{{\bm{Y}}}
\def\mZ{{\bm{Z}}}

\DeclareMathAlphabet{\mathsfit}{\encodingdefault}{\sfdefault}{m}{sl}
\SetMathAlphabet{\mathsfit}{bold}{\encodingdefault}{\sfdefault}{bx}{n}

\def\gP{{\mathcal{P}}}

\def\gS{{\mathcal{S}}}

\def\gU{{\mathcal{U}}}
\def\gV{{\mathcal{V}}}

\def\gX{{\mathcal{X}}}
\def\gY{{\mathcal{Y}}}
\def\gZ{{\mathcal{Z}}}

\def\sP{{\mathbb{P}}}

\def\sR{{\mathbb{R}}}

\newcommand{\E}{\mathbb{E}}

